\def\R{\mathbb{R}}
\def\Qcal{\mathcal{Q}}
\def\Ocal{\mathcal{O}}
\def\Pcal{\mathcal{P}}
\newcommand{\Lcal}{\mathcal{L}}
\def\SSBM{\operatorname{SSBM}}
\def\diag{\operatorname{diag}}
\def\tr{\operatorname{tr}}
\def\reff{\operatorname{ref}}
\def\iter{\operatorname{iter}}
\def\fro{F}
\newcommand{\ot}{\operatorname{ot}}
\newcommand{\OT}{\operatorname{W}}
\def\bigG{\mathbb{G}}
\def\bigA{\mathbb{A}}
\def\bigV{\mathbb{V}}
\def\bigE{\mathbb{E}}
\def\eg{\emph{e.g.}}
\def\ie{\textit{i.e.}}
\def\clustermatrix{P}
\def\clustmatbar{\overline{\clustermatrix}}
\def\kbar{\overline{k}}
\def\one{{\mathbf{1}}}
\newcommand{\integ}[1]{{[\![#1]\!]}}
\newcommand{\dist}{\mathcal{D}}
\newcommand{\abs}[1]{\left\vert #1 \right\vert}
\def\-{\raisebox{.75pt}{-}} 
\newcommand{\add}[1]{\textcolor{black}{#1}}
\newcommand{\addblock}[1]{\color{black} #1 \color{black}}
\newcommand{\addblocknew}[1]{\color{black} #1 \color{black}}
\newcommand{\addnew}[1]{\textcolor{black}{#1}}
\theoremstyle{thmstyleone}%
\newtheorem{theorem}{Theorem}
\newtheorem{lemma}{Lemma}
\newtheorem{proposition}[theorem]{Proposition}%
\theoremstyle{thmstyletwo}%
\newtheorem{remark}{Remark}%
\theoremstyle{thmstylethree}%
\newtheorem{definition}{Definition}%
\begin{document}

\title[PASCO (PArallel Structured COarsening)]{PASCO (PArallel Structured COarsening): \\ an overlay to speed up graph clustering algorithms}


\author*[1]{\fnm{Lasalle} \sur{Etienne}}\email{etienne.lasalle@ens-lyon.fr}

\author[1]{\fnm{Vaudaine} \sur{Rémi}}

\author[1]{\fnm{Vayer} \sur{Titouan}}

\author[2]{\fnm{Borgnat} \sur{Pierre}}

\author[1]{\fnm{Gonçalves} \sur{Paulo}}

\author[1]{\fnm{Gribonval} \sur{Rémi}}

\author[3,4]{\fnm{Karsai} \sur{M\'arton}}


\affil[1]{Inria, ENS de Lyon, CNRS, Université Claude Bernard Lyon 1, LIP, UMR 5668, 69342, Lyon cedex 07, France}

\affil[2]{CNRS, ENS de Lyon, LPENSL, UMR5672, F-69342, Lyon cedex 07, France}

\affil[3]{Department of Network and Data Science, Central European University, 1100 Vienna, Austria}

\affil[4]{National Laboratory for Health Security, HUN-REN Alfréd Rényi Institute of Mathematics, 1053 Budapest, Hungary}


\abstract{Clustering the nodes of a graph is a cornerstone of graph analysis and has been extensively studied. However, some popular methods are not suitable for very large graphs: \eg, spectral clustering requires the computation of the spectral decomposition of the Laplacian matrix, which is not applicable for large graphs with a large number of communities. This work introduces PASCO, an overlay that accelerates clustering algorithms. Our method consists of three steps: 
1- We compute several independent small graphs representing the input graph by applying an efficient and structure-preserving coarsening algorithm. 
2- A clustering algorithm is run in parallel onto each small graph and provides several partitions of the initial graph. 
3- These partitions are aligned and combined with an optimal transport method to output the final partition. 
The PASCO framework is based on two key contributions: a novel global algorithm structure designed to enable parallelization and a fast, empirically validated graph coarsening algorithm that preserves structural properties.
We demonstrate the strong performance of PASCO in terms of computational efficiency, structural preservation, and output partition quality, evaluated on both synthetic and real-world graph datasets.}

\keywords{Graph Analysis, Community Detection, Large-Scale Networks, Graph Coarsening, Optimal Transport}



\maketitle


\section{Introduction \label{sec:introduction}}

Graphs are a fundamental tool to model modern data sets as they become increasingly complex. \add{Graphs represent} complex systems of interacting entities, and applications are found in almost all domains of science.
A pillar of graph analysis is the problem of \emph{community detection} \add{(also called \emph{clustering})} in which one wants to partition the nodes of a graph so that nodes with similar connectivity patterns are clustered~\cite{von2007tutorial,fortunato2010community}. This problem arises in various domains, such as social sciences and genomics \cite{fortunato2016community,karatacs2018application}.
This task has already been extensively studied both theoretically and practically. 
However, these algorithms are often unsuited for large-scale community detection problems where the number of nodes $N$ and communities $k$ can become prohibitive. 

Several avenues have been explored to solve these scaling issues. Most follow this general scheme: first, reduce the size of the input graph, then cluster the reduced graph, and finally export the partition of the reduced data to the original data. 
There are two dominant ways to reduce the input data size: sampling or coarsening. \add{ While sampling techniques select a subset of nodes and discard the others, coarsening methods aggregate groups of nodes to obtain smaller graphs with similar overall structures. }

The present article proposes a new coarsening-based algorithmic overlay to reduce the overall computation time of clustering procedures. 
We focus on undirected networks and develop a versatile framework \emph{that can be used with any chosen clustering method}.
The method consists of three main parts and two novel contributions are proposed. First, the \emph{coarsening} phase computes several simpler and smaller representations of the input graph. We derive a new fast and empirically structure-preserving algorithm based on random edge contractions. 
The algorithm is executed multiple times in parallel to generate several simplified representations of the input graph. Then, in the \emph{clustering} part, any user-specified clustering algorithm adapted to weighted undirected graphs can be run in parallel on these simple graphs. Finally, after lifting the partitions of the coarsened graphs to obtain partitions of the input graph, we \add{move} to the \emph{fusion} part. Using an optimal-transport-based method, we combine these partitions to produce a better and final partition of the input graph.

\subsection{Contributions}
\begin{itemize}
    \item We propose PASCO, a new three-step coarsening-based framework to speed up graph clustering algorithms. Innovation comes from the structure of the \addnew{pipeline} that computes many differently coarsened graphs before clustering them independently (see \Cref{fig:pasco_pipeline}).
    It is a flexible design and serves as a computational overlay that can be applied to any clustering algorithm.
    \item We design a fast and efficient random coarsening algorithm \addnew{as a key component of the PASCO clustering pipeline}. Our approach is opposed to classical coarsening-based clustering approaches that rely on convoluted, and often costly, coarsening mechanisms. 
    \item We extensively evaluate PASCO and its components. The coarsening step and the fusion step are analyzed to confirm the preservation of the structure and the increase in partition quality. Then, the entire PASCO pipeline is tested on synthetic and real graph data. The results show speedups for computationally heavy clustering methods, while maintaining or even improving quality on complex real-world networks.
\end{itemize}

\subsection{Related Works}

Clustering the nodes of a graph has attracted a lot of attention: spectral methods~\cite{von2007tutorial}, information-theoretic approaches \cite{infomap:2011}, model-based approaches \cite{peixoto2014efficient}, and the popular maximization of modularity \cite{blondel2008fast}\add{, which is a measure of the quality of a partition of the nodes of a graph into communities.}
We refer the reader to \cite{fortunato2016community} for reviews on community detection methods.
However, these methods do not always scale well. Hence, various works have been proposed to speed up clustering computations. Some of these approaches are detailed now.

\textbf{General fast approaches to clustering:}
Substantial work has been devoted to accelerating spectral clustering, where the efforts essentially focus on faster solving of the spectral decomposition, \eg, using the Nyström method \cite{pourkamali2020scalable} or the power method \cite{boutsidis2015spectral}. 
In \cite{tremblay2016compressive}, the authors tackle the high computational cost of spectral clustering by approximating the spectral embedding using an efficient graph filtering of random signals and accelerating the $k$-means part using a sub-sampling strategy. Another way to accelerate clustering is to reduce the number of edges in the graph before computing the clustering. To do so, several sparsification techniques have been proposed, either by sampling and removing random edges \cite{spielman2011spectral} or using effective resistance \cite{spielman2008graph}. The review \cite{tremblay2020approximating} provides an overview of acceleration techniques in the case of spectral clustering. Other fast approaches construct a bipartite graph between the initial set of nodes and a new and smaller set of nodes and recover the community structure of the input graph from this bipartite graph \cite{yan2009fast, li2015large}.

\textbf{Coarsening approaches:} Most coarsening approaches \cite{hendrickson1995multi,dhillon2007weighted,loukas2019graph} rely on an iterative multilevel edge-contraction-based coarsening algorithm. That is, several coarsened graphs of decreasing sizes are computed iteratively. At each coarsening level, several edges are selected and collapsed to put their end vertices into the same hypernode. Then, some clustering algorithm is run {\em on the smallest coarsened graph} before {\em lifting} the result iteratively back to the next larger set of nodes. 
At each level, existing approaches exploit mainly one coarsening process. Therefore, at each lifting step, the partition is refined by evaluating the gain (w.r.t. a certain cost) to obtain a satisfying final partition. 
There is typically a trade-off between the extent of graph simplification used to accelerate clustering and the resources required to recover an accurate partition.
Our new coarsening algorithm is designed to prioritize efficiency in this trade-off: the quality of the partition will be ensured by its insertion into our three-step framework and, in particular, the fusion of clusters obtained from {\em multiple} coarsened graphs.
\add{Other coarsening approaches, not especially designed for accelerating clustering, also exist. Most of them focus on preserving the spectral properties of the graph as in \cite{bunimovich2019finding,jin2020graph}, that is, distorting as little as possible the eigenvalues and/or eigenvectors of matrix representations of the graphs (\eg, adjacency matrix, Laplacian matrix). For an overview of existing coarsening methods, we refer the reader to the survey \cite{chen2022graph}.}
\addnew{Multilevel clustering algorithms also contain a part of coarsening, as nodes are being grouped. Often, they perform deterministic optimized grouping according to a given criterion. In some cases, one wants to obtain balanced groups \cite{gottesburen2021deep}. Often, groups are made to optimize a clustering quality measure, \eg, the modularity \cite{blondel2008fast, rotta2011multilevel, waltman2013smart} or the description length \cite{infomap:2011}. Our PASCO clustering pipeline fits into this class of multilevel approaches. However, it differs from the mentioned works as its coarsening phase is randomized and is not guided by a specific clustering criterion, and several coarsened graphs are computed to obtain a better final partition.}

\textbf{Clustering ensemble:} Clustering ensemble combines multiple results of clustering {\em the same graph} to form a more robust consensus, improving stability and reliability by aggregating diverse partitions from different off-the-shelf algorithms or parameter settings. PASCO can be framed within the clustering ensemble framework, as we obtain several partitions of the initial graph (by random coarsening, clustering, and lifting) and combine them to output a final partition. Although both approaches involve merging multiple partitions, the philosophy is different from the usual clustering ensemble techniques: we first aim to {\em accelerate} clustering and not especially enhance the final clustering quality in terms of stability and robustness.
Overall, clustering ensemble methods can be divided into two main categories \cite{ghosh2018cluster}. 
The first is based on consensus functions where the output clustering is the one optimizing a notion of agreement of the given partitions \cite{wu2014k}, while
the second constructs a co-association matrix that characterizes the similarity between the data items based on the partitions \cite{fred2001finding}.

\subsection{Outline of the paper and notations}

The general framework of PASCO is introduced in \Cref{sec:pasco}. Its key phases are then further explained. The coarsening is detailed in \Cref{sec:coarsening_details} while alignment and fusion are presented in \Cref{sec:align_details}. The experimental results are shown in \Cref{sec:expe}.


For any integer $n\geq1$, we denote by $\one_n$ the vector of $\R^n$ with all entries equal to 1. The set of integers ranging from 1 to $n$ is denoted by $\integ{n}$. 
We will use exponents $G^{(\ell)}$, $1 \leq \ell \leq c$ to denote sequences of $c$ coarsened graphs, while the index $r$ in $G_{r}, 1 \leq r \leq R$ denotes the output of $R$ independent instances of the randomized coarsening algorithm.


\section{The PASCO approach for clustering \label{sec:pasco}}

Our approach aims to speed up clustering computations by applying a given clustering algorithm to several reduced versions of the initial graph and then combining the results to output the final clustering. 

\begin{figure}
\centering
\includegraphics[width=0.8\linewidth]{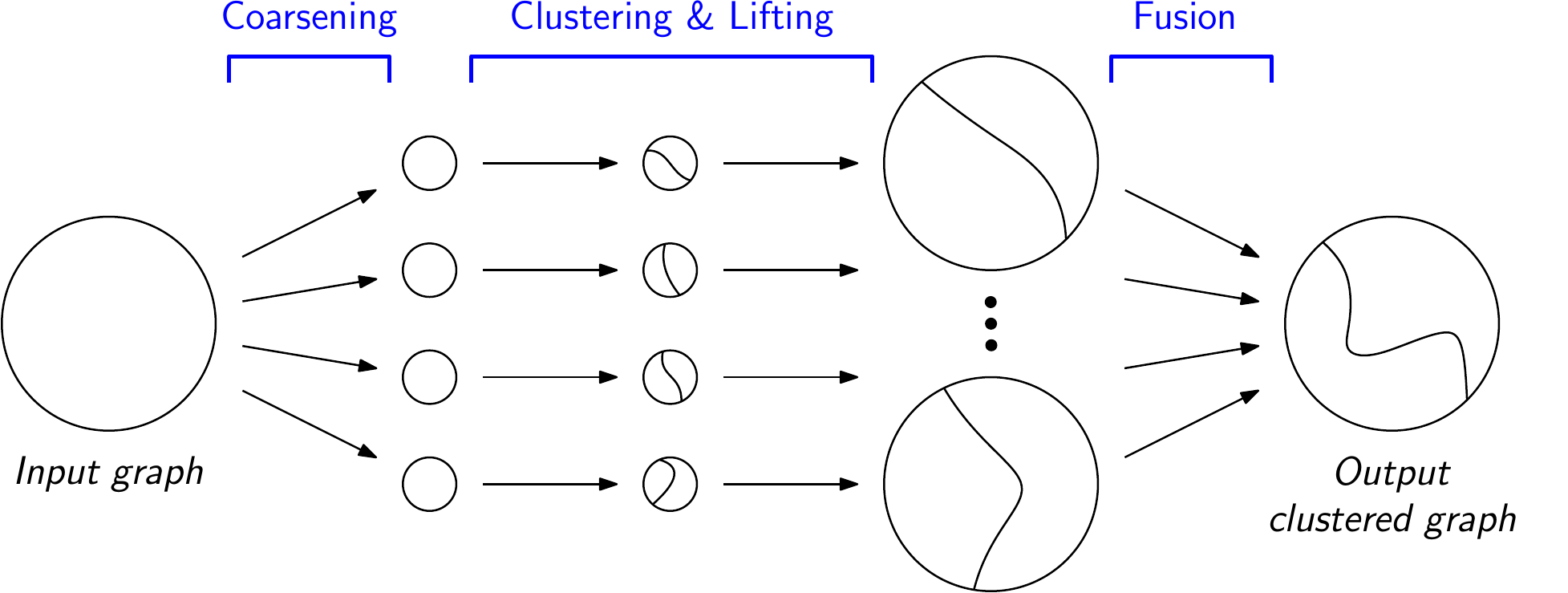}
\caption{PASCO pipeline. \add{Disks represent graphs. Curved lines in the disks figure the separations of the nodes into clusters.} {\em Coarsening}: Apply \add{our random, fast and information-preserving} coarsening algorithm to compute several small graphs, \add{each one being a small-size representation of the input graph}. 
{\em Clustering \& Lifting}: \add{Apply any off-the-shelf clustering algorithm \emph{in parallel} to each} of the coarsened graphs and lift each partition to a partition of the input graph. {\em Fusion}: Combine the partitions to output the final partition; \add{it is done by finding the partition that best agrees with all the given partitions, through solving a optimal-transport problem}.}
\label{fig:pasco_pipeline}
\end{figure}

Given some initial graph $\bigG = (\bigV,\bigE)$ with vertex set $\bigV$ and edge set $\bigE$, the \emph{random} coarsening algorithm is run $R$ times to obtain the coarsened graphs $G_{1}, \cdots, G_{R}$. A clustering algorithm is then applied to each of these graphs. The resulting partitions of the nodes of $G_{1}, \cdots, G_{R}$ are lifted up to partitions of the nodes of $\bigG$ and then combined to retrieve as much information as possible and output a final clustering. See Figure~\ref{fig:pasco_pipeline} for a schematic illustration of our approach. 
Below, we provide an overview of each part of the pipeline (coarsening, clustering, alignment, and fusion). The reader can refer to the next sections for more details.

\FloatBarrier

\textbf{Coarsening:} We propose a new randomized coarsening algorithm that takes into account the structure of the initial graph. 
This algorithm adopts a multilevel approach where we create the sequence of incrementally coarsened graphs
$\bigG = G^{(0)}, \dots, G^{(l)}, \dots, G^{(c)} = G$, starting from the initial graph $\bigG$ of size $N$. Each graph $G^{(\ell+1)}$ is obtained by coarsening $G^{(\ell)}$ to reduce the number of nodes from $n^{(\ell)}$ to $n^{(\ell+1)}$, such that $n^{(\ell)} > n^{(\ell+1)}$. The number of coarsening steps $c$ is the one required to reach the small target size $n$. 
Each iterative coarsening (from $G^{(\ell)}$ to $G^{(\ell+1)}$) is based on an edge-contraction approach. 
Our strategy is to sample edges (according to a given rule) and contract them by putting the two end-vertices into the same ``hypernode'', as shown in Figure~\ref{fig:one_step_coarsening}. 
We repeat this procedure until the target size $(n)$ of the coarsened graph is reached or no edge is available (according to our sampling rule).  
Note that coarsening generates small weighted graphs with self-loops.
The challenge of this approach is to find a relevant sampling rule so that the coarsening algorithm is both fast and as information-preserving as possible. In \Cref{sec:coarsening_details}, we provide all the details for this coarsening step, including details on the sampling rule, its positioning with respect to the state of the art, and the properties of the coarsening.

\begin{figure}
\centering
\includegraphics[height=4.5cm]{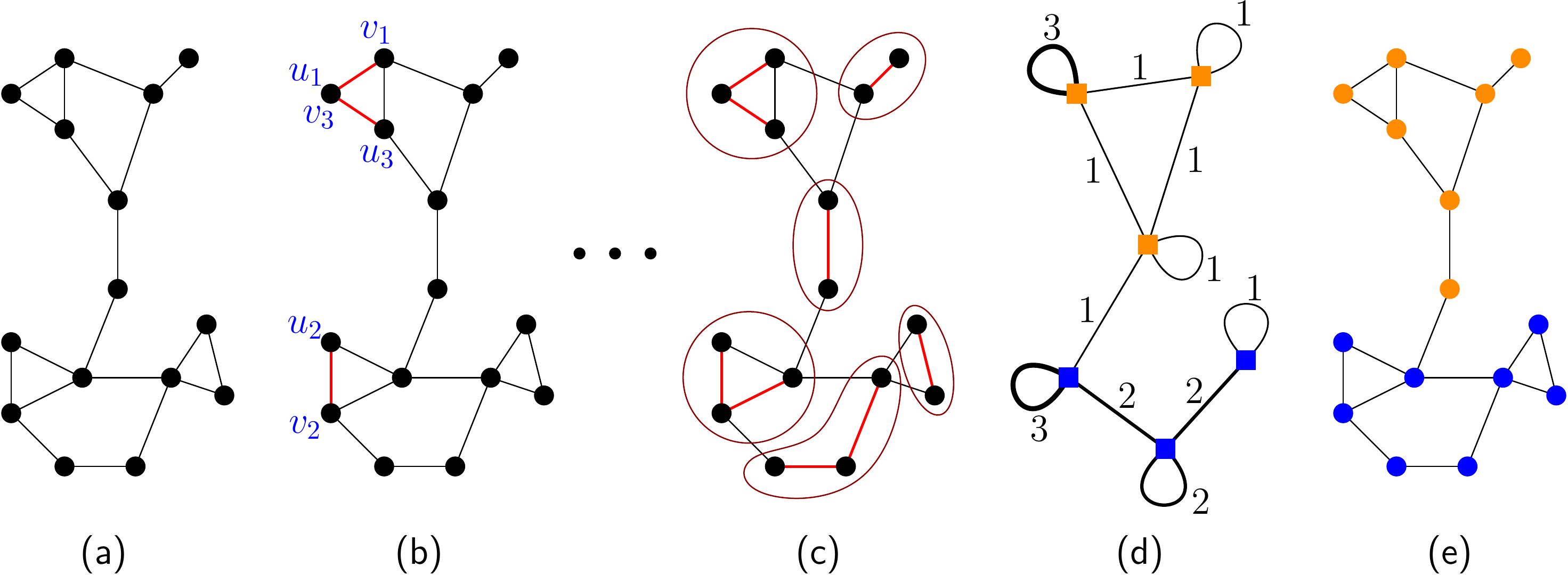}
\caption{\label{fig:one_step_coarsening} Illustration of one coarsening iteration (\Cref{alg:coarsening_one}). (a) The original graph; (b) The first three pairs of sampled nodes (in blue) and their corresponding edges (in red): first, $u_i$ is sampled uniformly at random among the unvisited nodes, then $v_i$ is sampled among the neighbors of $u_i$ (no restriction on $v_i$). (c) The sampled edges at the end of the sampling phase; each set of vertices connected by sampled edges is circled and yields a hypernode. \add{These circles are a visual representation of the coarsening table, see \Cref{def:coarsening_table}.} (d) The coarsened graph. Squares represent hypernodes and edge weights are given and represented by edge thickness (note the presence of self-loops). Node color represents a possible partition. \add{(e) Result of lifting the partition from the coarsened graph to the original graph.}}
\end{figure}


\textbf{Clustering:} The clustering phase consists in finding a partition of the hypernodes for each of the $R$ coarsened graphs $G_{1}, \cdots, G_{R}$ (see \Cref{fig:pasco_pipeline}). Interestingly, we can operate independently on each graph and compute in parallel these partitions to accelerate computation.
Our pipeline is designed so that any clustering algorithm can be used as long as it handles undirected weighted graphs. 
However, we only focus on algorithms that generate non-overlapping partitions. 
We point out that our fusion part \add{can handle} partitions with different numbers of clusters. Therefore, PASCO can be used with clustering algorithms that automatically choose the number of communities. 
At this stage, we obtain a partition of the hypernodes for each coarsened graph $G_{r}$. These partitions are then \emph{lifted} to partitions of the nodes of the initial graph: each initial node inherits the class of the hypernode to which it belongs \add{(see \Cref{fig:one_step_coarsening}e)}. 

\textbf{Alignment and Fusion:}
The final step of PASCO is to combine the various partitions lifted to the original graph into a single output partition. This is inspired by methods of ensemble clustering. 
To do so, we propose to achieve consensus among multiple partitions by leveraging optimal transport (OT)  \cite{peyre2019computational}. We first briefly define the partition matrices that are used to encode partitions.

\begin{definition}[Partition matrix]
\label{def:partition_matrix}
A matrix $\clustermatrix \in \{0,1\}^{N \times k}$ is a \emph{partition matrix} if it is column-stochastic $\clustermatrix \one_k = \one_{N}$. The fact that $P_{ij}=1$ indicates that node $i$ is attributed to cluster $j$.
We denote by $\mathcal{P}_{N, k}$ the set of partition matrices.
\end{definition}

Let ${\clustermatrix}_1, \cdots, {\clustermatrix}_R$ be the partition matrices representing the $R$ partitions of the initial graph such that ${\clustermatrix}_r \in \mathcal{P}_{N,k_r}$ with $k_r$ being the number of clusters in the $r$-th partition. 
There are several challenges that need to be tackled in order to obtain a consensus partition from $(\clustermatrix_r)_{r \in \integ{R}}$. First, these partitions may not have the same number of clusters as some clustering algorithms infer the number of clusters. Second, the partitions may not be consistent with each other, and even if they are, it is necessary to identify the unknown correspondences between their clusters\footnote{For example, permuting the columns yields different representations of the same partition.}.
To overcome these challenges, a core idea from the literature is to find a reference partition $\clustmatbar \in \Pcal_{N ,\kbar}$ which is the ‘‘closest'' to all the partitions $\clustermatrix_1, \cdots, \clustermatrix_R$.  
Optimal Transport (OT) provides tools to align probability distributions according to a ‘‘least-effort'' principle. It can be used to measure a notion of similarity between partitions.
Given a fixed prescribed number of clusters $\kbar$, we solve the OT barycenter problem
\begin{equation}
\label{eq:ot_barycenter}
\underset{\clustmatbar \in \mathcal{P}_{N, \kbar} }{\min}  \frac{1}{R} \sum\limits_{i=r}^R \OT^2_2(\mu_{\clustermatrix_r}, \mu_{\clustmatbar})\,,
\end{equation}
where $\mu_{\clustmatbar}, \mu_{\clustermatrix_r}$ are discrete probability distributions associated \add{with} $\clustmatbar, \clustermatrix_{r}$, $\OT^2_2$ is the squared Wasserstein distance and $\mathcal{P}_{N, \kbar}$ is the set of partition matrices (\Cref{def:partition_matrix}). 
%
In practice, to solve this barycenter problem, the algorithm starts from an initial reference $\clustmatbar$, and then alternates between realigning the partitions to this reference (\emph{alignment step}), and updating this reference (\emph{fusion step}) until convergence.
We provide all the details about this alignment and fusion step in \Cref{sec:align_details}.

\section{Coarsening in PASCO: contributions \label{sec:coarsening_details}}

This section details the implementation of a coarsening-based clustering method, reviews classical coarsening approaches, highlights PASCO’s design for enhanced speed, and conjectures a phase transition in stochastic block model parameters when PASCO yields good performance.


\subsection{General principles of coarsening methods \label{subsec:classical_coarsening}}

Let us present the general principles that are shared by classical coarsening methods.
Coarsening is encoded through \emph{coarsening tables}, which are arrays indicating to which hypernode each node is associated, as formalized in this definition.
\begin{definition}[Coarsening Table]
For a graph $\bigG = (\bigV,\bigE)$ with vertex set $\bigV = \{u_1, \dots, u_N\}$, coarsened into a graph $G = (V,E)$ with $V = \{ u'_1, \dots , u'_n \}$, the coarsening table is the vector $h \in \integ{n}^N$, such that node $u_i \in V$ is associated \add{with} hypernode $u'_{h_i} \in V$. We can also encode this table $h$ into a {\em coarsening matrix} $H \in \{0,1\}^{N \times n}$, where $H_{i,j}=1$ if and only if node $u_i$ is associated \add{with} hypernode $u'_j$. 
\label{def:coarsening_table}
\end{definition}

\begin{wrapfigure}{R}{0.5\linewidth}
	\vspace{-3em}
	\begin{minipage}{\linewidth}
	\begin{algorithm}[H]
		\addblock{
		\caption{\label{alg:coarsening_global} Global coarsening}
	    \begin{algorithmic}[1]
	    \State Input: Adjacency matrix $\bigA$ of graph $\bigG$ of size $N$. 
	    \item Compression factor $\rho$. 
	    \State Target graph size $n = \lfloor N / \rho \rfloor$.
	    \State $A^{(0)} \leftarrow \bigA$
	    \State $h \leftarrow (1, \dots, N)$
	    \State $\ell \leftarrow 0$
	    \While{$A^{(\ell)}$ has more than $n$ nodes}
	     \State Coarsen $A^{(\ell)}$ to obtain $h^{(\ell)}$ and $A^{(\ell+1)}$ with \Cref{alg:coarsening_one}
	     \State Update $h$ given $h^{(\ell)}$
	     \State $\ell \leftarrow \ell+1$
	    \EndWhile \\
	    \Return $A^{(\ell)}$, $h$
	    }
	    \end{algorithmic}
	\end{algorithm}
	\end{minipage}
	\vspace{-2em}
\end{wrapfigure}

We recall that coarsening is usually done by constructing a sequence of incrementally coarsened graphs $\bigG = G^{(0)} , \dots , G^{(c)} = G$ starting from a graph size $N$ down to the target size $n$. The target size is defined by $n = \lfloor N/\rho \rfloor$, where $\rho$ is called the \emph{compression factor} and is a hyper-parameter of the coarsening method. 
This main scheme is detailed in \Cref{alg:coarsening_global}, where $h$ is the coarsening table from the initial graph $\bigG$ to the current most coarsened graph $G^{(\ell)}$. 
When one coarsening step is performed (step 8), the coarsening table $h^{(\ell)}$, from $G^{(\ell)}$ to $G^{(\ell+1)}$, is obtained. Hypernodes are then relabeled so that $h^{(\ell)}$ takes consecutive integer values starting at 1. Then, the next coarsened graph $G^{(\ell+1)}$, or rather its adjacency/weight matrix  $A^{(\ell+1)}$, can be computed using the adjacency matrix $A^{(\ell)}$ of graph $G^{(\ell)}$ and the coarsening matrix $H^{(\ell)}$ encoding $h^{(\ell)}$ according to $A^{(\ell+1)} = {H^{(\ell)}}^\top A^{(\ell)} H^{(\ell)}$.
Then $h$ is updated coordinate-wise using by $h_i \leftarrow h^{(\ell)}_{h_i}, \quad \forall i \in \integ{N}$.

The diversity in graph coarsening methods arises from various sampling strategies for selecting collapsing edges. In \cite{hendrickson1995multi}, edges are contracted by randomly selecting an unvisited node and an unvisited neighbor. The heavy-edge heuristic introduced in \cite{karypis1998fast} prioritizes edges with the heaviest weights, aiming to group similar nodes into the same hypernode. This approach has been extended with tailored weights to optimize specific objectives, such as Graclus \cite{dhillon2007weighted} for cut optimization and \cite{loukas2019graph} for preserving spectral properties.

After coarsening the initial graph $G^{(0)}$ into $G^{(c)}$, a clustering algorithm is run to obtain a partition $\clustermatrix^{(c)}$ of the nodes of $G^{(c)}$. This clustering information is then transferred from the coarsened graph to the initial graph using a so-called \emph{lifting} step. 
A simple way to lift a partition $\clustermatrix^{(\ell)}$ of $G^{(\ell)}$ to a partition of $G^{(\ell-1)}$ is to state that each node in $G^{(\ell-1)}$ inherits the cluster of the hypernode of $G^{(\ell)}$ to which they belong. 
Mathematically, this translates to the matrix product $\clustermatrix^{(\ell-1)} = H^{(\ell-1)} \clustermatrix^{(\ell)}$, where $H^{(\ell-1)}$ is the coarsening matrix from $G^{(\ell-1)}$ to $G^{(\ell)}$. 
However, for classical coarsening approaches, this simple lifting method does not provide good quality clustering as the coarsening loses too much information and extra refining steps are necessary.

Overall, existing approaches often focus on complex, computationally intensive coarsening steps. Additionally, by exploiting only one coarsening process, they are bound to make use of computationally costly refinement steps in the lifting procedure to recover a satisfying partition. 
In the next section, we will see how PASCO differentiates itself from these existing works by resorting to simpler (and thus faster) coarsening and lifting steps. 
Partitions of good quality will be recovered, not by complexifying the procedure, but by using several coarsening processes in parallel and combining the resulting partitions (see \Cref{sec:align_details}). 

\subsection{Coarsening in PASCO}
\label{subsec:coarsening_implem}

The coarsening approach in PASCO is similar to some existing methods in the sense that it is an iterative and multi-level edge-contraction-based
coarsening method. Starting from the initial graph $\bigG$ of size $N$, we aim to coarsen it to a smaller graph of target size $n = \lfloor N/\rho \rfloor$,  ($\rho$ being the \emph{compressive factor}) following \Cref{alg:coarsening_global}. 
The innovation for PASCO comes from the way each iterative coarsening (from $G^{(\ell)}$ to $G^{(\ell+1)}$) is performed (step 8 in \Cref{alg:coarsening_global}). More precisely, as it is an edge-contraction-based approach, we introduce a new simple but efficient edge sampling mechanism, detailed in \Cref{alg:coarsening_one}. In this approach, we propose to sample uniformly at random an unvisited vertex $u$ of $G^{(\ell)}$, and sample one of its neighbors $v$ (potentially already visited) uniformly at random (steps 5 and 6 of \Cref{alg:coarsening_one} \add{or \Cref{fig:one_step_coarsening}b}). The edge $(u,v)$ is used \add{to update the coarsening table}; that is, $u$ and $v$ are \add{assigned} to the same hypernode (step 7), and then both vertices $u$ and $v$ are set as visited (step 8). \add{Observe that groups of nodes circled in dark red in \Cref{fig:one_step_coarsening}c represent the final state of the coarsening table.}

\begin{algorithm}[H]    
	\addblock{
    \caption{\label{alg:coarsening_one} One level of coarsening}
    \begin{algorithmic}[1]
     \State Input: Current adjacency matrix $A^{(\ell)}$, target graph size $n$.
     \State $V_a \leftarrow \{1, \dots, n^{(\ell)}\}$ \Comment{Initialize the set of available nodes.}
     \State $h^{(\ell)} \leftarrow (1, \dots, n^{(\ell)})$  \Comment{Initialize the coarsening table.}
     \While{$V_a \neq \emptyset$} \Comment{while there are available nodes}
         \State Choose $u$ a node in $V_a$ uniformly at random
         \State Choose $v$ a neighbor of $u$ uniformly at random
         \State $h^{(\ell)}_u \leftarrow h^{(\ell)}_v$ \Comment{Put $u$ and $v$ into the same hypernode}
         \State $V_a = V_a \backslash \{u,v\}$ \Comment{Remove $u$ and $v$ from the available nodes}
         \State \textbf{If} target size is reached \textbf{do} \textbf{break}
     \EndWhile 
     \State Relabel $h^{(\ell)}$ and compute $A^{(\ell+1)}$ \\
     \Return $A^{(\ell+1)}$, $h^{(\ell)}$.
    \end{algorithmic}
     }
\end{algorithm} 

\textbf{Computational efficiency:} First, the algorithm aims at minimizing the number of intermediate coarsening steps $c$ by creating hypernodes that contain as many nodes as possible at each step. 
In \cite{hendrickson1995multi}, the authors proposed to sample $u$ and $v$ from the set of unvisited nodes, restricting the hypernodes to contain at most two nodes. As a consequence, the coarsening step quickly runs out of available edges to collapse and a new intermediate coarsened graph must be computed. To avoid this issue and create bigger hypernodes, we relax the restriction \add{on} unvisited nodes: we only require that $u$ is unvisited and we \emph{put no restriction on $v$}. 
Moreover, the sampling of collapsing edges by first taking a node uniformly at random is very efficient, as it can be done in $\Ocal(1)$. 
In contrast, strategies to sample node $u$ according to some non-uniform probability (\eg, a probability proportional to the node degree) are more costly, as they require to compute the cumulative sum of probabilities which is in $\Ocal(n^{(\ell)})$, if $n^{(\ell)}$ is the number of nodes.
Finally, we remove the refining steps when lifting the partition back to the input graph, as this will be taken care of in the next step with alignment and fusion of the different obtained partitions.
In Appendix~\ref{app:edge_sampling}, we detail other simple edge sampling rules that we investigated here but were unsatisfactory for the present work. However, it provides insight into the choices that led to our method. 
\addblocknew{
Finally, \Cref{prop:cplx_algo1} shows the complexity of the coarsening phase. 
Its proof is deferred to Appendix~\ref{app:complexity}
\begin{proposition}
    The complexity of \Cref{alg:coarsening_global} is $\Ocal\left( (1+\log\rho) |E| \right)$, where $|E|$ is the number of non-zero coefficients in $\bigA$ (number of edges in the graph).
    \label{prop:cplx_algo1}
\end{proposition}}

\subsection{Structure preserving properties of the coarsening \label{subsec:phase_transition}}

This section examines the properties and limitations of PASCO's coarsening on random graphs with community structures. To preserve community information, hypernodes must primarily consist of nodes from the same community, which requires collapsing intra-community edges. We analyze the conditions under which PASCO coarsening favors such edges, focusing on graphs generated by the Symmetric Stochastic Block Model (SSBM) defined below.

\begin{definition}[Symmetric Stochastic Block Model]
The SSBM is a random graph model with $N$ nodes divided into $k$ equal-sized communities. Each edge is present with probability $p_{in}$ if inside a community or $p_{out}$ if between communities, independently of all other edges. As in \cite{tremblay2016compressive}, we parametrize the model by $N$, $k$, the expected degree $d_N = d \log N$ and the intra-to-inter-community probability ratio\footnote{The use of $\alpha$ as a parameter is relevant for the experiments done afterwards, as we then keep the density fixed while varying the difficulty level $\alpha$ to recover the blocks.} $\alpha = p_{out}/p_{in}$. We refer to this model by $\SSBM(N,k,d, \alpha)$. 
\label{def:SSBM}
\end{definition}


Consider an input graph drawn from an SSBM with $k$ communities, an edge probability inside communities of $p_{in}$, and an edge probability between communities of $p_{out}$. In PASCO, edges to collapse are obtained by first drawing some node $u$ and taking a random neighbor. 
In expectation, $u$ has $n p_{in} / k$ neighbors from its community and $ n (k-1) p_{out} / k$ neighbors from other communities. So when $ p_{in} > (k-1) p_{out} $, $v$ is more likely to be from the same community as $u$. More generally, under this condition, we expect the coarsening procedure of PASCO to collapse more inside-community edges than between-community edges. Therefore, we conjecture that PASCO conserves the community structure of a graph drawn from a $\SSBM(N,k, d, \alpha)$ as long as $\alpha = p_{out}/p_{in} < 1 / (k-1)$.
\add{This remains a conjecture, not yet supported by rigorous proof. However,} experiments are providing empirical evidence that this phase transition correlates with PASCO's performance as described in our experiments in  \Cref{subsec:expe_coarsening} and \Cref{subsec:expe_SBM}.

\section{Alignment and fusion\label{sec:align_details}}


The last step of PASCO is to align and combine the partitions obtained from various coarsened graphs. 
We advocate the use of an OT-based approach to align the partitions, as initially proposed in \cite{li2019optimal}. 
The key idea is to define a notion of distance between partitions by using the Wasserstein distance $\OT_2$. 
Precisely, one can represent a partition matrices $\clustermatrix \in \mathcal{P}_{N, k}$ as a discrete probability distribution. Writing $\clustermatrix = (p_1, \cdots, p_k)$ where $p_j$ is the $j$-th column of $\clustermatrix$, the $j$-th cluster can be represented by the vector $p_j \in \{0,1\}^{N}$. The discrete probability distribution in $\R^{N}$ associated \add{with} $\clustermatrix$ is then given by $\mu_{\clustermatrix}= \frac{1}{k} \sum_{j=1}^{k} \delta_{p_{j}}$ where $\delta$ is the Dirac mass. 

For two partitions  $\clustmatbar \in \mathcal{P}_{N, \kbar}, \clustermatrix \in \mathcal{P}_{N, k}$, we can compare them by comparing their associated probability measures $\mu_{\clustmatbar}, \mu_{\clustermatrix}$ through the Wasserstein distance
\begin{equation}
\label{eq:ot_problem}
\begin{split}
\OT_2^2(\mu_{\clustermatrix}, \mu_{\clustmatbar}) &= \min_{Q \in \Qcal_{\ot}(k, \kbar)} \ \sum_{i,j=1}^{k, \kbar} \|p_{i} - \overline{p
}_{j}\|_2^2 Q_{i,j}\,.
\end{split}
\end{equation}
In \Cref{eq:ot_problem} the set $\Qcal_{\ot}(k, \kbar) \subseteq \R_+^{k \times \kbar}$ denotes the collection of all {\em coupling matrices}, {\em i.e.} matrices
$Q \in \R_{+}^{k \times \kbar}$ that satisfy the marginal constraints: $Q\one_{\kbar} = \frac{1}{k} \one_{k}$ and $\ Q^\top\one_{k} = \frac{1}{\kbar} \one_{\kbar}$.
Intuitively, the element $Q_{i,j} \in [0,1]$ represents the amount of probability mass shifted from the $i$-th cluster of $\clustermatrix$ to the $j$-th cluster of $\clustmatbar$. 
This coupling can be used to align the clusters of the two partitions: in the special case where $k = \kbar$, an optimal solution is given by the permutation that best realigns the clusters \cite{peyre2019computational}. We point out that solving \eqref{eq:ot_problem} is done through a linear program that can be computed with standard solvers \cite{flamary2021pot} with a worst-case complexity $\Ocal(K^3 (\log K)^2)$ where $K = \max\{k, \kbar\}$.
We rely on this distance to achieve a consensus among the different partitions by solving an OT barycenter problem: we fix a number of desired clusters $\kbar$ and look for the partition matrix $\clustmatbar \in \mathcal{P}_{N, \kbar}$ that minimizes \eqref{eq:ot_barycenter}.

As described in \cite{cuturi2014fast}, this barycenter problem can be tackled by alternating between solving $R$ problems of OT and updating the reference $\clustmatbar$. 
As described in \Cref{lemma:fusion_update}, the reference update can be obtained in closed-form by a simple majority vote as follows.
\begin{equation}
\label{eq:fusion_update_ref}
\forall i \in \integ{N}, [\clustmatbar]_{ij} \leftarrow \begin{cases} 1 & j \in \underset{p \in \integ{\kbar}}{\operatorname{argmax}} \ [\sum_{r=1}^{R} P_r Q_r]_{ip} \\ 0 & \text{otherwise}  \end{cases}\,,
\end{equation}
where $Q_1, \cdots, Q_R$ are the optimal coupling matrices obtained in the previous step when solving the individual OT problems between the previous reference partition and the $\clustermatrix_1, \cdots, \clustermatrix_R$.
The algorithm for alignment and fusion is sketched in \Cref{alg:alginment_fusion}.
This OT-based alignment + fusion algorithm  
requires a choice for the target number of clusters $\kbar$ and an initial reference $\clustmatbar$. We use the following heuristic in practice: If all the $k_r$ are equal, we choose $\kbar = k_{1}$ and initialize the reference partition with $\clustmatbar = P_{1}$, otherwise we choose $\kbar$ as the $k_r$ closest to the median number of clusters across the partitions in the dataset $\operatorname{median}(k_1, \cdots, k_R)$, and we initialize $\clustmatbar$ as the corresponding partition. 

\begin{wrapfigure}{R}{0.6\linewidth}
\vspace{-3em}
	\begin{minipage}{\linewidth}
	\begin{algorithm}[H]
	\caption{\label{alg:alginment_fusion}Alignment \& Fusion algorithm}
	\begin{algorithmic}
	\addblock{
	 \State Partitions ${\clustermatrix}_1, \cdots, {\clustermatrix}_R$, number of clusters $\kbar$ and initial reference $\clustmatbar$.
	  \While{$\clustmatbar$ has not converged}
	  \For{$r \in \integ{R}$}\Comment{Alignment step}
	  \State Find $Q_r$ so that $\clustermatrix_r Q_r$ is aligned on $\clustmatbar$ (\ie, solve the OT problem \eqref{eq:ot_problem} in \Cref{sec:align_details}) 
	  \EndFor
	  \State{Update $\clustmatbar$ using majority vote on $(\clustermatrix_r Q_r)_{r \in \integ{R}}$, as in \eqref{eq:fusion_update_ref} in \Cref{sec:align_details}.} 
	  \Comment{Fusion step}
	  \EndWhile \\
	  \Return{$\clustmatbar \in \{0,1\}^{N \times \kbar}$}
	  }
	\end{algorithmic}
	\end{algorithm}
	\end{minipage}
\end{wrapfigure}

\textbf{Complexity analysis:} 
Let $K = \max\{\kbar, k_1, \cdots, k_R\}$ then the algorithm runs in $\Ocal(n_{\iter} R (N K^2 + K^3 \log(K)^2))$ where $n_{\iter}$ is the number of iterations required for $\clustmatbar$ to converge.
Overall, the algorithm scales linearly in $N$ and has roughly a cubic complexity \textit{w.r.t.} the number of clusters, which is often small compared to the number of nodes. In practice, we observe that $n_{\iter}$ is small (on the order of 10). 

\textbf{Other methods for alignment + fusion:} As an alternative to the described OT approach, we also investigated the so-called \emph{linear-regression-based} and the \emph{many-to-one} methods \cite{ayad2010voting}, as well as a slightly different variant of OT based on quadratic-penalized OT \cite{blondel2018smooth}. 
These methods also solve a barycenter problem but for other notions of distance between partitions. 
In practice, we find that the standard OT-based method performs better for our application (see Section \ref{sec:expe} and especially Figures~\ref{fig:illustration_fusion} and \ref{subfig:influence_align}). 
The presentation of these alternative methods is deferred to \Cref{sec:consensus}.

\FloatBarrier
\section{Experiments \label{sec:expe}}

We conducted experiments to evaluate PASCO. Section~\ref{subsec:expe_coarsening} shows that the coarsening step preserves well graph spectral properties. Section~\ref{subsec:expe_SBM} and Section~\ref{subsec:expe_real} evaluate PASCO on SSBM and real graphs respectively. See \Cref{app:implem_details} for details about computing resources. Section~\ref{subsec:expe_fusion} studies the alignment and fusion phases. The code for PASCO and the experiments is available at \url{https://github.com/elasalle/PASCO} and at \cite{lasalle:hal-05086352v1}.

\subsection{Coarsening: conservation of graph spectral properties}
\label{subsec:expe_coarsening}

\addnew{The performance of our coarsening phase is assessed through Loukas's analytical framework \cite{loukas2019graph}}, \addnew{where} coarsening techniques \addnew{are evaluated} based on the spectral properties of the graph Laplacian. Loukas introduced the \emph{Restricted Spectral Approximation} (RSA) to quantify how well the projection matrix $\Pi = H H^+$ approximates the identity on $\mathcal{U}_k$, the subspace spanned by the $k$ eigenvectors of $L$ associated \add{with} the {\em smallest} eigenvalues. RSA is defined as the smallest $\varepsilon$ such that $\|x - \Pi x\|_L \leq \varepsilon \|x\|_L$ for all $x \in \mathcal{U}_k$. Here, $L$ is the combinatorial Laplacian, $H$ is the binary coarsening matrix, and $H^+$ is its pseudo-inverse.

\begin{figure}
    \centering
    \includegraphics[width=\linewidth]{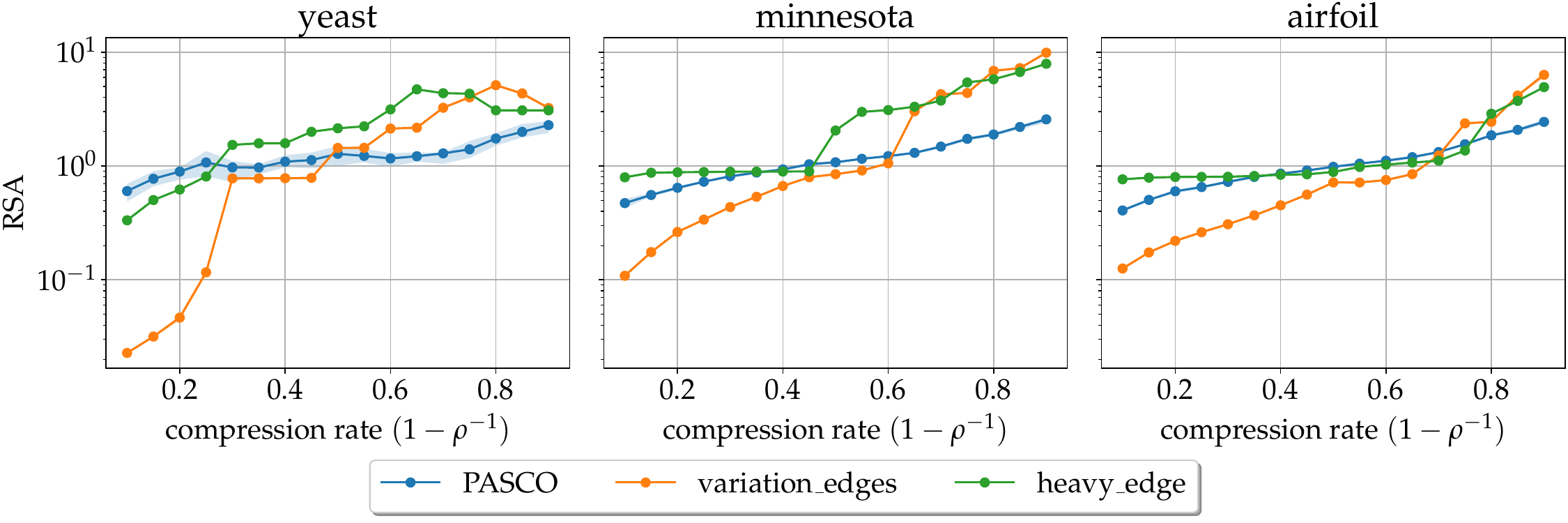} 
    \caption{We represent the RSA (the smaller, the better) of various coarsening schemes (including PASCO) as a function of the compression rate (the higher, the coarser the obtained graph). Shaded areas represent 0.2 upper- and lower-quantiles. 
    }
    \label{fig:RSA}
\end{figure}


\textbf{Experimental setting:} In these experiments, we compare our method in terms of RSA and computational time with existing \addnew{coarsening} methods\addnew{: \texttt{heavy\_edge} \cite{karypis1998fast,loukas2018spectrally} and \texttt{variation\_edges} \cite{loukas2019graph}} . \addnew{The former iteratively collapses the edge with the largest weight. It is fast and simple and represents the basis of more evolved approaches. The latter essentially applies \texttt{heavy\_edge} newly defined weights such that the coarsening procedure optimizes the RSA. Therefore, it is interesting to compare our coarsening step with \texttt{variation\_edges} in terms of RSA.} 

Figure~\ref{fig:RSA} shows the results for our proposed method (PASCO) \addnew{and the concurrent methods.} 
We display the \texttt{RSA} values with respect to the \emph{compression ratio} $1-\rho^{-1}$. Recall that $\rho$ is the \emph{compression factor} defining how much the initial graph is coarsened ($n = \lfloor N/\rho \rfloor$). Assuming that $N/\rho$ is an integer, the \emph{compression ratio} corresponds to the ratio $(N-n)/N = 1-\rho^{-1}$. \add{As the coarsening step of PASCO is random, we average the performance over 10 runs. The other approaches are not repeated as they are deterministic.}
\add{
The experiments are performed on the real graphs used in \cite{loukas2019graph} as well as on random SSBM graphs. The real graphs include \texttt{yeast} a protein-protein interaction network, \texttt{airfoil} a mesh for airflow simulation, and \texttt{minnesota} a road network (see \Cref{tab:graph_caracteristics} for some graph characteristics).
\Cref{fig:RSA} shows the results for the real graphs while \Cref{fig:RSA_SBM} shows the SSBM ones. We provide the computational times of each coarsening method for both dataset in \Cref{fig:RSA_timings}.}

\textbf{Results:} While \texttt{variation\_edges} and to some extent \texttt{heavy\_edge} generally yield the best RSA at \emph{small compression rates}, we emphasize that PASCO was not specifically tailored to preserve such spectral properties unlike \texttt{variation\_edges} which is designed to optimize the RSA.
Moreover, we are rather interested in \emph{high compression rates}, as significant clustering computation time gains are to be expected. 
In this high compression regime, despite the fact that \texttt{variation\_edges} was tailored to optimize the RSA, PASCO is still competitive, especially on real graphs where it outperforms both \texttt{variation\_edges} and \texttt{heavy\_edge}.
Moreover, PASCO proves to be much faster than both other methods: by more than a factor 10 for \texttt{heavy\_edge} and around a factor 100 for \texttt{variation\_edges}  (see \Cref{fig:RSA_timings}).
These experiments demonstrate that, although the coarsening step of PASCO is primarily designed for computational efficiency, it also preserves the structures of the initial graph as effectively as, or sometimes even better than, traditional coarsening methods. They also demonstrate that PASCO is much faster than the other coarsening algorithms tested, as expected.

\subsection{Synthetic graph experiment and parameter analysis.}
\label{subsec:expe_SBM}

We explore the performance of PASCO (coupled with spectral clustering \add{(SC)} to cluster the coarsened graphs) on an initial graph generated using the symmetric stochastic block model. 

\textbf{Quality of the output partition:} To study the influence of the hyperparameters of PASCO, we conduct an experiment on synthetic graph data from the Symmetric Stochastic Block Model $\SSBM(N, k,d, \alpha)$, see \Cref{def:SSBM}. We take $N=10^4$ and $k=20$ and set $p_{in}$ and $p_{out}$ such that the average degree is $d_N = d \log(N)$ with $d = 3/2$. We vary the fraction $\alpha = p_{out}/p_{in}$ from 0 (excluded) to $\alpha_{\sup} = \frac{4}{3} \frac{1}{k-1}$. This range includes both the phase transition threshold of PASCO, of value ${1}/{(k-1)}$ (as conjectured in \Cref{subsec:phase_transition}) and the threshold of exact recovery for spectral clustering, \ie, $\alpha_c = (d - \sqrt{d}) / (d + (k-1) \sqrt{d})$.


For each set of SSBM parameters, we draw 10 graphs, ensuring their connectedness with rejection sampling. For each graph, we compute the performance of PASCO with spectral clustering measured by the AMI. We study the influence of each parameter: the compression factor $\rho$ (such that $n = \lfloor N/\rho \rfloor$), the number of coarsened graphs $R$, and the method used for alignment $\texttt{align\_method}$. Each parameter varies as follows, while the others are kept constant to a default value (written here in bold): $\rho \in \{1,3,5,\mathbf{10},15,20\}$, $R \in \{1,3,5,\mathbf{10},15,20 \}$, $\texttt{align\_method} \in \{ \texttt{lin\_reg}, \texttt{many\_to\_one}, \textbf{\texttt{ot}}\}$. Performance is averaged over the 10 realizations and displayed in Figure~\ref{fig:param_influence}. 

\begin{figure}  
\centering
\begin{subfigure}{0.45\linewidth}  
\includegraphics[width = \textwidth]{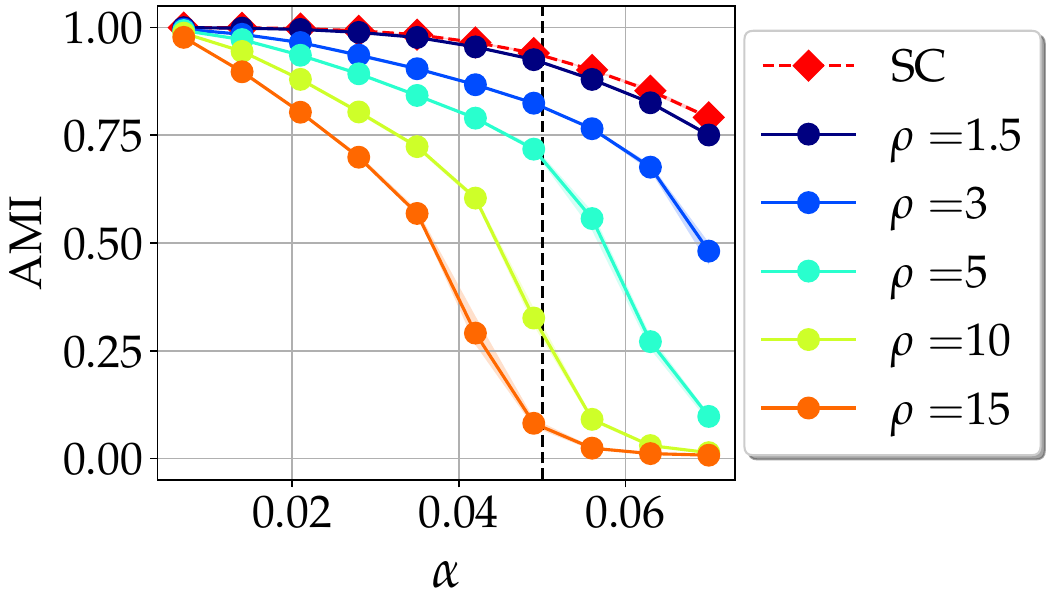}
\caption{Influence of $\rho$ ($R=10$).}
\label{subfig:influence_rho}
\end{subfigure} 
\begin{subfigure}{0.45\linewidth}  
\includegraphics[width = \textwidth]{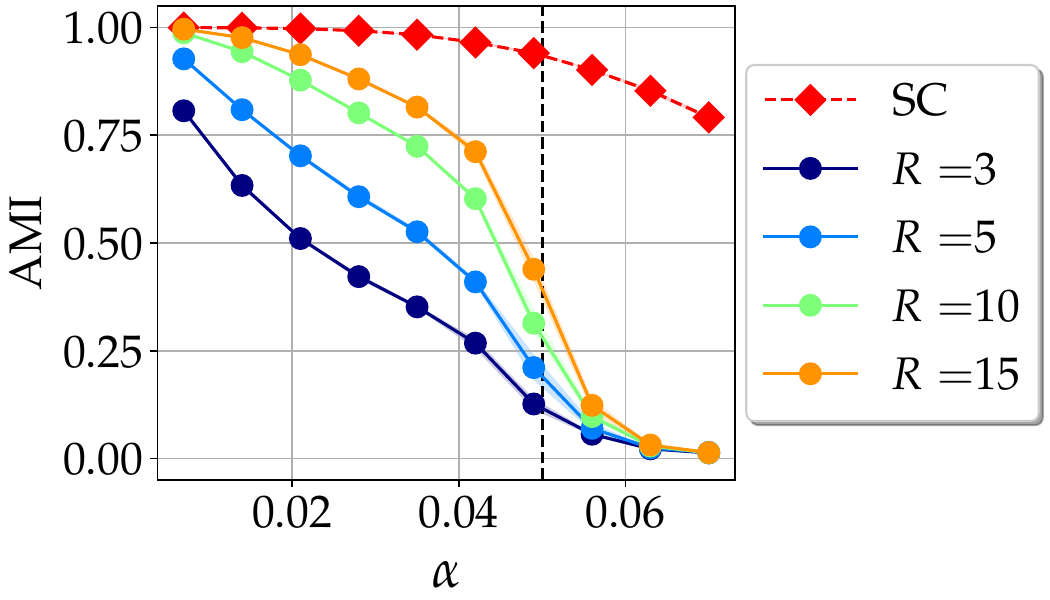}
\caption{Influence of $R$ ($\rho=10$).}
\label{subfig:influence_n_tables}
\end{subfigure} 
\caption{Parameters influence on PASCO. The AMI score is average over the 10 runs and shaded areas represent 0.2 upper- and lower-quantiles. Dashed lines correspond to the PASCO threshold.}
\label{fig:param_influence}
\end{figure}    

The compression factor $\rho$ determines how small the coarsened graphs are. Hence, the larger $\rho$, the harder it is to retrieve the communities of the input graphs (see Figure~\ref{subfig:influence_rho}). Moreover, it is even more difficult to recover the communities when they are not much denser than the rest of the graph (large $\alpha$). However, better performance is achieved with more coarsened graphs (bigger $R$); see Figure~\ref{subfig:influence_n_tables}. The rise in AMI with $R$ confirms that the alignment and fusion process is effectively able to combine the noisy information contained in each clustering. We also observe a change in the behavior of PASCO after the conjectured threshold ($\alpha > 1/(k-1)$, vertical dashed line). 
In the Appendix, Figure~\ref{subfig:influence_align} indicates that all alignment methods perform similarly, excepted when close to PASCO's conjectured phase transition as then $\texttt{ot}$ outperforms the other methods. 

\textbf{Study of the computational time:} In this experiment, we study the gains in computational time due to PASCO (coupled with spectral clustering (SC) to cluster the coarsened graphs), compared to plain SC.
As highlighted in the previous experiment, larger compression factors $\rho$ should be compensated by larger numbers of coarsened graphs $R$ to ensure good partition quality. In the following, we arbitrarily decide to fix the number of coarsened graphs equal to the compression factor, \ie, $R = \rho$ and study the impact of $\rho$ on computational time. The results are presented in Figures~\ref{fig:pasco_timing}. The SSBM parameters are set to $N=10^5$, $d=3/2$, and $\alpha = 1/(2(k-1))$ and $k$ varies in $\{20, 100, 1000\}$. For each set of SSBM parameters, we draw 10 graphs, ensuring their connectedness thanks to rejection sampling. For each graph, we compute the computational time of PASCO for values of $\rho$ in $\{3,5,10,15\}$. 


\begin{figure}
\centering
\begin{subfigure}{0.45\linewidth}
\includegraphics[width=\linewidth]{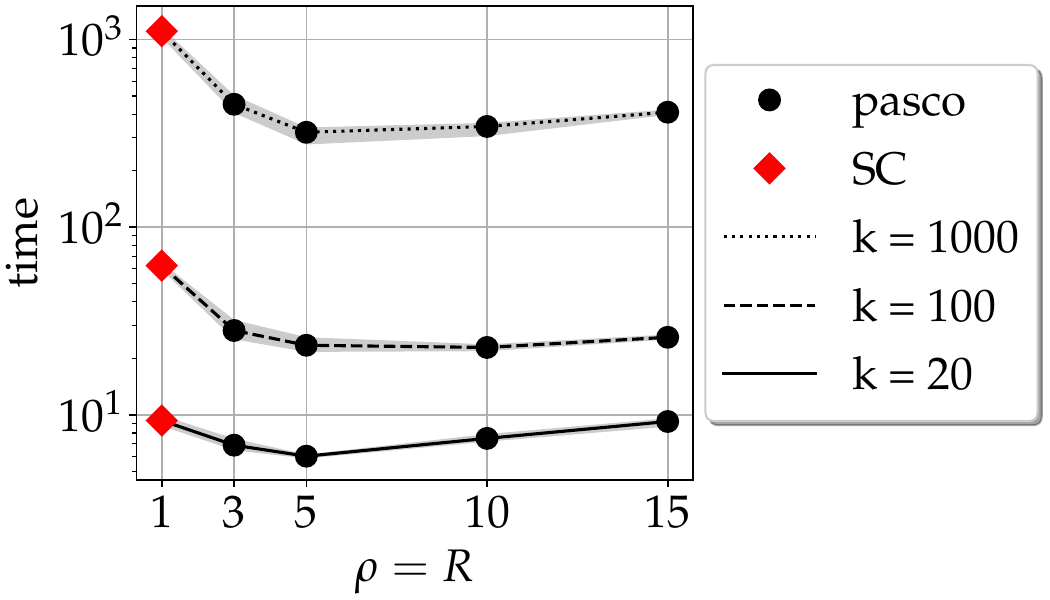}
\caption{Computational time.}
\label{subfig:pasco_timing}
\end{subfigure}
\begin{subfigure}{0.45\linewidth}
\includegraphics[width=\linewidth]{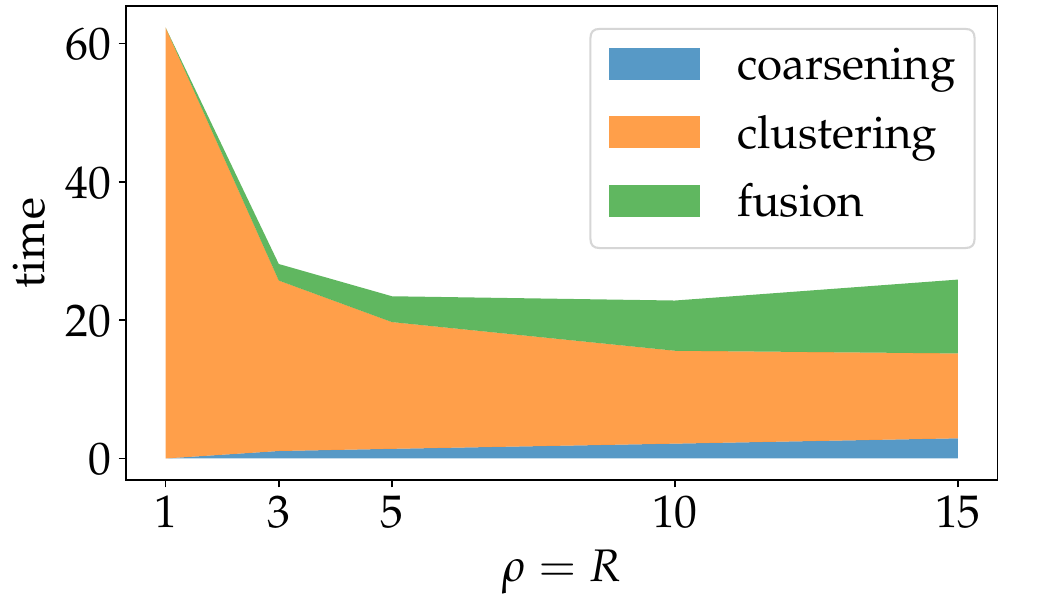}
\caption{Time details $(k=100)$.}
\label{subfig:stacked_times}
\end{subfigure}
\caption{Influence of $\rho$ and $R$ on PASCO computational time, letting $R=\rho$. The number of communities varies in $\{20, 100, 1000\}$. Timings are averaged over 10 runs and shaded areas represent 0.2 upper- and lower-quantiles.}
\label{fig:pasco_timing}
\end{figure}

\textbf{}
Figure~\ref{subfig:pasco_timing} shows that using $\rho=R>1$ accelerates the overall computation compared to plain SC, which corresponds to $\rho=R=1$. For this experiment, an empirical optimum is found around $\rho=R=5$. The overall speedup is greater when the number of communities $k$ is larger. Indeed, in this case, the spectral decomposition in SC is really computationally demanding, and performing it on smaller coarsened graphs leads to a significant improvement in the computational time. Moreover, even for large $\rho$, the coarsening and fusion part still amount to only a small proportion of the computational effort, as illustrated in Figure~\ref{subfig:stacked_times}.  
Figure~\ref{subfig:quality_for_timing} attests that the significant speedup of Figure~\ref{subfig:pasco_timing} does not come at the cost of poor quality of the output partitions.

\subsection{Real graphs experiment \label{subsec:expe_real}}

To further validate the ability of PASCO to improve the performance of clustering algorithms, we conduct experiments on real datasets. We consider three large graphs (\texttt{arxiv}, \texttt{mag}, and \texttt{products}) come from \texttt{Open Graph Benchmark}~\cite{hu2020ogb}.  
The experiments are run on the largest connected component of each graph. Their characteristics are in Table \ref{tab:real_datasets}. Some datasets have features associated with the nodes, but they are ignored in these experiments as the study is limited to non-attributed graph clustering. \add{Only the graph structure is used for clustering.}

\textbf{Clustering Algorithms:} We below list the clustering algorithms that we use within the PASCO pipeline, and we provide details on the graph characteristics they try to optimize. The \emph{Spectral Clustering} algorithm (SC) exploits eigenvectors of the Laplacian matrix. It minimizes a notion of \emph{generalized normalized cut} (gnCut) \cite{dhillon2004kernel}. 
A modified version of SC was proposed by \cite{tremblay2016compressive} to accelerate and reduce memory print and is called \emph{Compressive Spectral Clustering} (CSC). 
The classical \emph{Louvain} method \cite{blondel2008fast} and its modern variant \emph{Leiden} \cite{traag2019louvain} are both modularity maximization methods. 
We recall that modularity measures the partition quality by comparing inside/outside community densities \cite{newman2004finding}.
Finally, we also include clustering algorithms that use the Minimization of Description Length (MDL) to either maximize the likelihood of the stochastic block model~\cite{peixoto2014efficient}, or the compression of the graph into clusters as per the \emph{infomap} method~\cite{infomap:2011}. These algorithms rely on the \emph{Description Length} (DL) \cite{Rissanen:2007} that quantifies the amount of information required to describe the parameters of an SBM adjusted to the observed graph. 
In \Cref{app:scores}, we recall the definition of the scores mentioned above: gnCut, modularity, DL, as well as the AMI.

\begin{wrapfigure}{r}{0.5\linewidth}
    \vspace{-1.5em}
    \centering
    \includegraphics[width=0.8\linewidth]{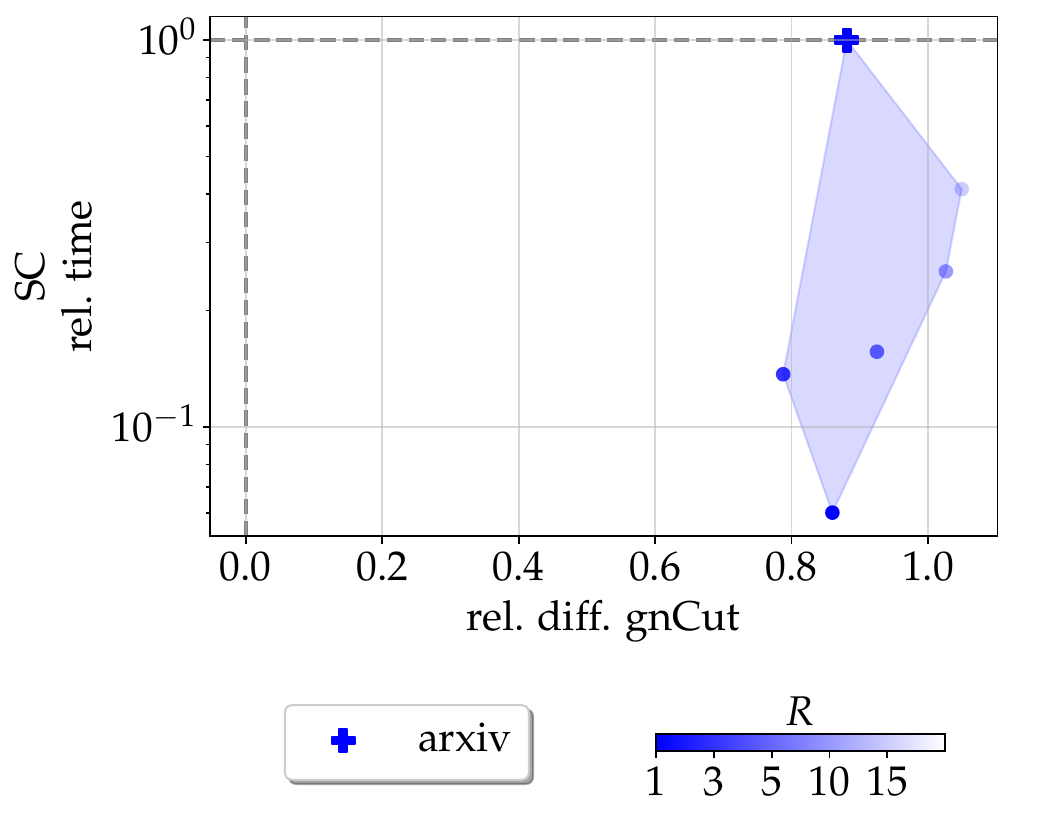}
    \caption{\add{An example to help read Figure~\ref{fig:real_ytime}. Each plot represents the performance of PASCO associated with a given clustering method (here SC). Specific markers (here a cross) represent the standalone clustering method’s performance, while disk markers show PASCO’s. Color (here blue) indicate the dataset (here \texttt{arxiv}). 
    Performance is evaluated by computational time (y-axis, relative to standalone method, hence specific markers at $y=1$) and clustering quality (x-axis, measured by AMI with the true partition or relative quality score difference; here, generalized normalized cut).
    Disk transparency varies with the number of coarsened graphs $R$ (see color bar). Shaded area represents the convex hull of points associated with a dataset, ideally trending downward (speedup) and either rightward for AMI or closer to zero for relative quality differences.}}
    \label{fig:real_example}
    \vspace{-2em}
\end{wrapfigure}

\textbf{Experimental setting:} 
We evaluate PASCO's impact on computational time and partition quality by comparing clustering methods with and without PASCO. Graphs are clustered using SC, CSC, Louvain, Leiden, MDL, and Infomap, then re-clustered by applying PASCO combined with the same clustering algorithms. Except for CSC, which we re-implemented in Python due to it being only available in Matlab, standard implementations were used. The compression factor $\rho$ is fixed at 10, and coarsening repetitions $R$ vary in $\{1, 3, 5, 10, 15\}$. Partition quality is assessed using AMI for agreement with ground truth and intrinsic scores like modularity, gnCut, and DL.
For these scores, the relative difference $(s_\textnormal{est} - s_\textnormal{true}) / s_\textnormal{true}$ is reported,
where $s_\textnormal{est}$ and $s_\textnormal{true}$ are the scores of the, respectively, estimated and ground-truth partitions. 
We also compute the total computational time for each method and report the ratio between PASCO running time and the one of the standalone clustering method. See the paragraph below for some precision on how computational time is measured. 
The results are in Figure~\ref{fig:real_ytime}. 
\add{We explain how to read it on a simpler example in \Cref{fig:real_example}.}
For completeness, \addnew{tables of \Cref{app:real_data_tables} provide the numerical results}.

\textbf{Measuring Computational time:} 
To measure computational time, we record the duration from start to finish. Since PASCO relies on parallelization, we run each clustering method on a single core, whether used alone or with PASCO. This ensures that PASCO's parallelization does not conflict with that of the clustering methods, allowing us to evaluate PASCO's speedup effect. Consequently, clustering methods optimized with parallelization will have diminished performance, so comparisons between algorithms based on computational time are not meaningful. However, we can effectively analyze PASCO's impact on individual clustering methods. Extending PASCO to support parallelized clustering across multiple cores or machines is beyond the scope of this article.

\textbf{Results:} 
The experiments show that \addnew{the PASCO clustering pipeline} improves runtime or clustering quality for most clustering methods and graphs under study, sometimes achieving both. With SC, CSC, or Infomap, PASCO significantly reduces runtime in most cases, often by a factor of 10 or more. For Leiden, runtime improvements are smaller and are mainly seen with fewer coarsening repetitions ($R < 5$). With MDL, the effect is lighter and more data-dependent. No runtime improvement is observed with Louvain, as its fast clustering is offset by the additional time for PASCO's coarsening and fusion steps. 


\addnew{The PASCO clustering pipeline} achieves a similar quality in clustering as the methods it uses. This is a notable result, given that it was initially designed to reduce computational costs. 
For AMI, which compares partitions with ground truth, \addnew{the PASCO clustering pipeline} improves the results for SC and CSC, while other methods show no significant change. Modularity and Description Length scores tend to be more ``regularized", with values closer to ground truth scores. The gnCut criterion is generally preserved, except for one case (MDL on \texttt{mag}). This preservation may be related to the ability of the coarsening step to preserve spectral properties (\Cref{subsec:expe_coarsening}). 

\addnew{In parts of the experiments, the PASCO clustering pipeline was combined with multilevel clustering algorithms (\eg, Louvain, Leiden, Infomap), while PASCO also fits in this category. In these cases, doubling the multilevel phases (one in PASCO and one in the clustering method) only yields mitigated computational gains (except Infomap). Indeed, because of their multilevel nature, the standalone methods are already fast and hard to accelerate. However, we observe improvements in the AMI and the clustering criterion (closer to the value of the true partition) when using PASCO, consolidating the fact that computing multiple randomized coarsenings yields a more robust final partition.}




\begin{figure}
    \centering
    \includegraphics[width = \linewidth]{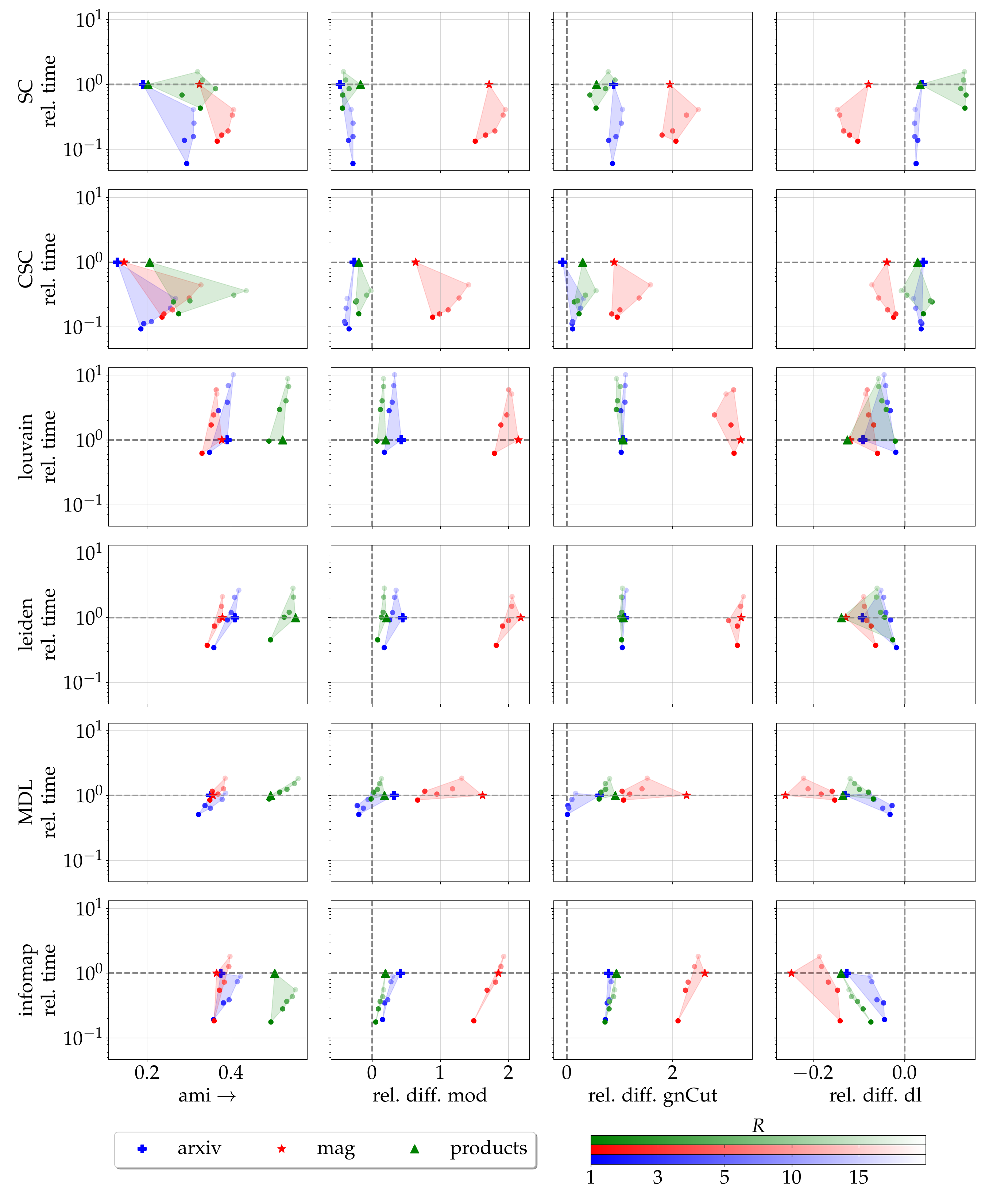}
    \caption{Results of the real graphs experiment \add{(See \Cref{fig:real_example} for an isolated example helping to read each of the sub-figures)}. Rows correspond to clustering methods, while columns correspond to clustering quality measures. On $y$-axes,  we represent the computational time relative to the one of the standalone method. On the $x$-axis we display either the AMI with the true partition or the relative quality score difference. This experiment is performed with different graphs (colors) and for different numbers of repetitions of the coarsening $R$ (transparency). The reduction factor is $\rho = 10$.}
    \label{fig:real_ytime}
\end{figure}



\section{Conclusion \label{sec:conclusion}}

We introduced \addnew{the PASCO clustering pipeline}, a novel approach to accelerate graph clustering algorithms through a coarsening-based strategy. \add{It is designed to be applied to a variety of clustering methods} and is built around three main steps: reducing the graph via a \addnew{new} fast and empirically structure-preserving random coarsening process, running clustering algorithms in parallel on the coarsened graphs, and combining the resulting partitions using an optimal transport-based fusion technique. 
The experimental results demonstrate the efficiency of PASCO: significantly reducing computational time while maintaining, or even enhancing, the quality of the resulting partitions, as shown on both synthetic and real-world graph datasets. 
As PASCO is a modular framework, it can be seamlessly integrated with various clustering algorithms, making it a versatile tool suitable for large-scale graph clustering challenges. \addnew{It will be particularly interesting when the number of expected communities is large (\eg, trading networks, gene networks or social networks), as the PASCO clustering pipeline produces the best computational gains in this setting.}

\add{ Going further, it would be interesting to explore the use of PASCO on more complex computing architectures. The goal would be to fully exploit both parallelization in PASCO and parallelizable clustering methods.  One could also explore applications of PASCO in other domains, where our coarsening algorithm could be used to accelerate other graph mining algorithms, such as visualization or node classification. For these tasks, similar principles of parallelization and fusion exploited in \addnew{the PASCO clustering pipeline} could be applied. 
}

\section*{Acknowledgements}

This work has been supported by the DATAREDUX project, ANR19-CE46-0008, and the ANR DARLING project ANR-19-CE48-0002. This project was supported in part by the AllegroAssai ANR project ANR-19-CHIA0009.

\bibliography{sn-bibliography}


\begin{thebibliography}{46}
\ifx \bisbn   \undefined \def \bisbn  #1{ISBN #1}\fi
\ifx \binits  \undefined \def \binits#1{#1}\fi
\ifx \bauthor  \undefined \def \bauthor#1{#1}\fi
\ifx \batitle  \undefined \def \batitle#1{#1}\fi
\ifx \bjtitle  \undefined \def \bjtitle#1{#1}\fi
\ifx \bvolume  \undefined \def \bvolume#1{\textbf{#1}}\fi
\ifx \byear  \undefined \def \byear#1{#1}\fi
\ifx \bissue  \undefined \def \bissue#1{#1}\fi
\ifx \bfpage  \undefined \def \bfpage#1{#1}\fi
\ifx \blpage  \undefined \def \blpage #1{#1}\fi
\ifx \burl  \undefined \def \burl#1{\textsf{#1}}\fi
\ifx \doiurl  \undefined \def \doiurl#1{\url{https://doi.org/#1}}\fi
\ifx \betal  \undefined \def \betal{\textit{et al.}}\fi
\ifx \binstitute  \undefined \def \binstitute#1{#1}\fi
\ifx \binstitutionaled  \undefined \def \binstitutionaled#1{#1}\fi
\ifx \bctitle  \undefined \def \bctitle#1{#1}\fi
\ifx \beditor  \undefined \def \beditor#1{#1}\fi
\ifx \bpublisher  \undefined \def \bpublisher#1{#1}\fi
\ifx \bbtitle  \undefined \def \bbtitle#1{#1}\fi
\ifx \bedition  \undefined \def \bedition#1{#1}\fi
\ifx \bseriesno  \undefined \def \bseriesno#1{#1}\fi
\ifx \blocation  \undefined \def \blocation#1{#1}\fi
\ifx \bsertitle  \undefined \def \bsertitle#1{#1}\fi
\ifx \bsnm \undefined \def \bsnm#1{#1}\fi
\ifx \bsuffix \undefined \def \bsuffix#1{#1}\fi
\ifx \bparticle \undefined \def \bparticle#1{#1}\fi
\ifx \barticle \undefined \def \barticle#1{#1}\fi
\bibcommenthead
\ifx \bconfdate \undefined \def \bconfdate #1{#1}\fi
\ifx \botherref \undefined \def \botherref #1{#1}\fi
\ifx \url \undefined \def \url#1{\textsf{#1}}\fi
\ifx \bchapter \undefined \def \bchapter#1{#1}\fi
\ifx \bbook \undefined \def \bbook#1{#1}\fi
\ifx \bcomment \undefined \def \bcomment#1{#1}\fi
\ifx \oauthor \undefined \def \oauthor#1{#1}\fi
\ifx \citeauthoryear \undefined \def \citeauthoryear#1{#1}\fi
\ifx \endbibitem  \undefined \def \endbibitem {}\fi
\ifx \bconflocation  \undefined \def \bconflocation#1{#1}\fi
\ifx \arxivurl  \undefined \def \arxivurl#1{\textsf{#1}}\fi
\csname PreBibitemsHook\endcsname

\bibitem[\protect\citeauthoryear{Von~Luxburg}{2007}]{von2007tutorial}
\begin{botherref}
\oauthor{\bsnm{Von~Luxburg}, \binits{U.}}:
A tutorial on spectral clustering.
Statistics and computing
\textbf{17}
(2007)
\end{botherref}
\endbibitem

\bibitem[\protect\citeauthoryear{Fortunato}{2010}]{fortunato2010community}
\begin{botherref}
\oauthor{\bsnm{Fortunato}, \binits{S.}}:
Community detection in graphs.
Physics reports
\textbf{486}
(2010)
\end{botherref}
\endbibitem

\bibitem[\protect\citeauthoryear{Fortunato and
  Hric}{2016}]{fortunato2016community}
\begin{botherref}
\oauthor{\bsnm{Fortunato}, \binits{S.}},
\oauthor{\bsnm{Hric}, \binits{D.}}:
Community detection in networks: A user guide.
Physics Reports
\textbf{659}
(2016).
Community detection in networks: A user guide
\end{botherref}
\endbibitem

\bibitem[\protect\citeauthoryear{Karata{\c{s}} and
  {\c{S}}ahin}{2018}]{karatacs2018application}
\begin{bchapter}
\bauthor{\bsnm{Karata{\c{s}}}, \binits{A.}},
\bauthor{\bsnm{{\c{S}}ahin}, \binits{S.}}:
\bctitle{Application areas of community detection: A review}.
In: \bbtitle{2018 International Congress on Big Data, Deep Learning and
  Fighting Cyber Terrorism (IBIGDELFT)}
(\byear{2018})
\end{bchapter}
\endbibitem

\bibitem[\protect\citeauthoryear{Rosvall and Bergstrom}{2011}]{infomap:2011}
\begin{botherref}
\oauthor{\bsnm{Rosvall}, \binits{M.}},
\oauthor{\bsnm{Bergstrom}, \binits{C.T.}}:
Multilevel compression of random walks on networks reveals hierarchical
  organization in large integrated systems.
PLOS ONE
\textbf{6}
(2011)
\end{botherref}
\endbibitem

\bibitem[\protect\citeauthoryear{Peixoto}{2014}]{peixoto2014efficient}
\begin{botherref}
\oauthor{\bsnm{Peixoto}, \binits{T.P.}}:
Efficient monte carlo and greedy heuristic for the inference of stochastic
  block models.
Physical Review E
\textbf{89}(1)
(2014)
\end{botherref}
\endbibitem

\bibitem[\protect\citeauthoryear{Blondel et~al.}{2008}]{blondel2008fast}
\begin{botherref}
\oauthor{\bsnm{Blondel}, \binits{V.D.}},
\oauthor{\bsnm{Guillaume}, \binits{J.-L.}},
\oauthor{\bsnm{Lambiotte}, \binits{R.}},
\oauthor{\bsnm{Lefebvre}, \binits{E.}}:
Fast unfolding of communities in large networks.
Journal of statistical mechanics: theory and experiment
\textbf{2008}(10)
(2008)
\end{botherref}
\endbibitem

\bibitem[\protect\citeauthoryear{Pourkamali-Anaraki}{2020}]{pourkamali2020scalable}
\begin{botherref}
\oauthor{\bsnm{Pourkamali-Anaraki}, \binits{F.}}:
Scalable spectral clustering with nystr{\"o}m approximation: Practical and
  theoretical aspects.
IEEE Open Journal of Signal Processing
\textbf{1}
(2020)
\end{botherref}
\endbibitem

\bibitem[\protect\citeauthoryear{Boutsidis
  et~al.}{2015}]{boutsidis2015spectral}
\begin{bchapter}
\bauthor{\bsnm{Boutsidis}, \binits{C.}},
\bauthor{\bsnm{Kambadur}, \binits{P.}},
\bauthor{\bsnm{Gittens}, \binits{A.}}:
\bctitle{Spectral clustering via the power method-provably}.
In: \bbtitle{ICML}
(\byear{2015})
\end{bchapter}
\endbibitem

\bibitem[\protect\citeauthoryear{Tremblay
  et~al.}{2016}]{tremblay2016compressive}
\begin{botherref}
\oauthor{\bsnm{Tremblay}, \binits{N.}},
\oauthor{\bsnm{Puy}, \binits{G.}},
\oauthor{\bsnm{Gribonval}, \binits{R.}},
\oauthor{\bsnm{Vandergheynst}, \binits{P.}}:
Compressive spectral clustering.
ICML
(2016)
\end{botherref}
\endbibitem

\bibitem[\protect\citeauthoryear{Spielman and
  Teng}{2011}]{spielman2011spectral}
\begin{botherref}
\oauthor{\bsnm{Spielman}, \binits{D.A.}},
\oauthor{\bsnm{Teng}, \binits{S.-H.}}:
Spectral sparsification of graphs.
SIAM Journal on Computing
\textbf{40}(4)
(2011)
\end{botherref}
\endbibitem

\bibitem[\protect\citeauthoryear{Spielman and
  Srivastava}{2008}]{spielman2008graph}
\begin{bchapter}
\bauthor{\bsnm{Spielman}, \binits{D.A.}},
\bauthor{\bsnm{Srivastava}, \binits{N.}}:
\bctitle{Graph sparsification by effective resistances}.
In: \bbtitle{Proceedings of the Fortieth Annual ACM Symposium on Theory of
  Computing}
(\byear{2008})
\end{bchapter}
\endbibitem

\bibitem[\protect\citeauthoryear{Tremblay and
  Loukas}{2020}]{tremblay2020approximating}
\begin{botherref}
\oauthor{\bsnm{Tremblay}, \binits{N.}},
\oauthor{\bsnm{Loukas}, \binits{A.}}:
Approximating spectral clustering via sampling: a review.
Sampling Techniques for Supervised or Unsupervised Tasks
(2020)
\end{botherref}
\endbibitem

\bibitem[\protect\citeauthoryear{Yan et~al.}{2009}]{yan2009fast}
\begin{bchapter}
\bauthor{\bsnm{Yan}, \binits{D.}},
\bauthor{\bsnm{Huang}, \binits{L.}},
\bauthor{\bsnm{Jordan}, \binits{M.I.}}:
\bctitle{Fast approximate spectral clustering}.
In: \bbtitle{Proceedings of the 15th ACM SIGKDD International Conference on
  Knowledge Discovery and Data Mining}
(\byear{2009})
\end{bchapter}
\endbibitem

\bibitem[\protect\citeauthoryear{Li et~al.}{2015}]{li2015large}
\begin{bchapter}
\bauthor{\bsnm{Li}, \binits{Y.}},
\bauthor{\bsnm{Nie}, \binits{F.}},
\bauthor{\bsnm{Huang}, \binits{H.}},
\bauthor{\bsnm{Huang}, \binits{J.}}:
\bctitle{Large-scale multi-view spectral clustering via bipartite graph}.
In: \bbtitle{AAAI}
(\byear{2015})
\end{bchapter}
\endbibitem

\bibitem[\protect\citeauthoryear{Hendrickson
  et~al.}{1995}]{hendrickson1995multi}
\begin{botherref}
\oauthor{\bsnm{Hendrickson}, \binits{B.}},
\oauthor{\bsnm{Leland}, \binits{R.W.}}, et al.:
A multi-level algorithm for partitioning graphs.
SC
\textbf{95}(28)
(1995)
\end{botherref}
\endbibitem

\bibitem[\protect\citeauthoryear{Dhillon et~al.}{2007}]{dhillon2007weighted}
\begin{botherref}
\oauthor{\bsnm{Dhillon}, \binits{I.S.}},
\oauthor{\bsnm{Guan}, \binits{Y.}},
\oauthor{\bsnm{Kulis}, \binits{B.}}:
Weighted graph cuts without eigenvectors a multilevel approach.
IEEE transactions on pattern analysis and machine intelligence
\textbf{29}(11)
(2007)
\end{botherref}
\endbibitem

\bibitem[\protect\citeauthoryear{Loukas}{2019}]{loukas2019graph}
\begin{botherref}
\oauthor{\bsnm{Loukas}, \binits{A.}}:
Graph reduction with spectral and cut guarantees.
Journal of Machine Learning Research
\textbf{20}(116)
(2019)
\end{botherref}
\endbibitem

\bibitem[\protect\citeauthoryear{Bunimovich
  et~al.}{2019}]{bunimovich2019finding}
\begin{botherref}
\oauthor{\bsnm{Bunimovich}, \binits{L.}},
\oauthor{\bsnm{Smith}, \binits{D.}},
\oauthor{\bsnm{Webb}, \binits{B.}}:
Finding hidden structures, hierarchies, and cores in networks via isospectral
  reduction.
Applied Mathematics \& Nonlinear Sciences
\textbf{4}(1)
(2019)
\end{botherref}
\endbibitem

\bibitem[\protect\citeauthoryear{Jin et~al.}{2020}]{jin2020graph}
\begin{bchapter}
\bauthor{\bsnm{Jin}, \binits{Y.}},
\bauthor{\bsnm{Loukas}, \binits{A.}},
\bauthor{\bsnm{JaJa}, \binits{J.}}:
\bctitle{Graph coarsening with preserved spectral properties}.
In: \bbtitle{International Conference on Artificial Intelligence and
  Statistics},
pp. \bfpage{4452}--\blpage{4462}
(\byear{2020}).
\bcomment{PMLR}
\end{bchapter}
\endbibitem

\bibitem[\protect\citeauthoryear{Chen et~al.}{2022}]{chen2022graph}
\begin{barticle}
\bauthor{\bsnm{Chen}, \binits{J.}},
\bauthor{\bsnm{Saad}, \binits{Y.}},
\bauthor{\bsnm{Zhang}, \binits{Z.}}:
\batitle{Graph coarsening: from scientific computing to machine learning}.
\bjtitle{SeMA Journal}
\bvolume{79}(\bissue{1}),
\bfpage{187}--\blpage{223}
(\byear{2022})
\end{barticle}
\endbibitem

\bibitem[\protect\citeauthoryear{Gottesb{\"u}ren
  et~al.}{2021}]{gottesburen2021deep}
\begin{botherref}
\oauthor{\bsnm{Gottesb{\"u}ren}, \binits{L.}},
\oauthor{\bsnm{Heuer}, \binits{T.}},
\oauthor{\bsnm{Sanders}, \binits{P.}},
\oauthor{\bsnm{Schulz}, \binits{C.}},
\oauthor{\bsnm{Seemaier}, \binits{D.}}:
Deep multilevel graph partitioning.
arXiv preprint arXiv:2105.02022
(2021)
\end{botherref}
\endbibitem

\bibitem[\protect\citeauthoryear{Rotta and Noack}{2011}]{rotta2011multilevel}
\begin{barticle}
\bauthor{\bsnm{Rotta}, \binits{R.}},
\bauthor{\bsnm{Noack}, \binits{A.}}:
\batitle{Multilevel local search algorithms for modularity clustering}.
\bjtitle{Journal of Experimental Algorithmics (JEA)}
\bvolume{16},
\bfpage{2}--\blpage{1}
(\byear{2011})
\end{barticle}
\endbibitem

\bibitem[\protect\citeauthoryear{Waltman and Van~Eck}{2013}]{waltman2013smart}
\begin{barticle}
\bauthor{\bsnm{Waltman}, \binits{L.}},
\bauthor{\bsnm{Van~Eck}, \binits{N.J.}}:
\batitle{A smart local moving algorithm for large-scale modularity-based
  community detection}.
\bjtitle{The European physical journal B}
\bvolume{86},
\bfpage{1}--\blpage{14}
(\byear{2013})
\end{barticle}
\endbibitem

\bibitem[\protect\citeauthoryear{Ghosh and Acharya}{2018}]{ghosh2018cluster}
\begin{botherref}
\oauthor{\bsnm{Ghosh}, \binits{J.}},
\oauthor{\bsnm{Acharya}, \binits{A.}}:
Cluster ensembles: Theory and applications.
Data Clustering
(2018)
\end{botherref}
\endbibitem

\bibitem[\protect\citeauthoryear{Wu et~al.}{2014}]{wu2014k}
\begin{botherref}
\oauthor{\bsnm{Wu}, \binits{J.}},
\oauthor{\bsnm{Liu}, \binits{H.}},
\oauthor{\bsnm{Xiong}, \binits{H.}},
\oauthor{\bsnm{Cao}, \binits{J.}},
\oauthor{\bsnm{Chen}, \binits{J.}}:
K-means-based consensus clustering: A unified view.
IEEE transactions on knowledge and data engineering
\textbf{27}(1)
(2014)
\end{botherref}
\endbibitem

\bibitem[\protect\citeauthoryear{Fred}{2001}]{fred2001finding}
\begin{bchapter}
\bauthor{\bsnm{Fred}, \binits{A.}}:
\bctitle{Finding consistent clusters in data partitions}.
In: \bbtitle{International Workshop on Multiple Classifier Systems}
(\byear{2001})
\end{bchapter}
\endbibitem

\bibitem[\protect\citeauthoryear{Peyr{\'e}
  et~al.}{2019}]{peyre2019computational}
\begin{botherref}
\oauthor{\bsnm{Peyr{\'e}}, \binits{G.}},
\oauthor{\bsnm{Cuturi}, \binits{M.}}, et al.:
Computational optimal transport: With applications to data science.
Foundations and Trends{\textregistered} in Machine Learning
\textbf{11}(5-6)
(2019)
\end{botherref}
\endbibitem

\bibitem[\protect\citeauthoryear{Karypis and Kumar}{1998}]{karypis1998fast}
\begin{botherref}
\oauthor{\bsnm{Karypis}, \binits{G.}},
\oauthor{\bsnm{Kumar}, \binits{V.}}:
A fast and high quality multilevel scheme for partitioning irregular graphs.
SIAM Journal on scientific Computing
\textbf{20}(1)
(1998)
\end{botherref}
\endbibitem

\bibitem[\protect\citeauthoryear{Li et~al.}{2019}]{li2019optimal}
\begin{botherref}
\oauthor{\bsnm{Li}, \binits{J.}},
\oauthor{\bsnm{Seo}, \binits{B.}},
\oauthor{\bsnm{Lin}, \binits{L.}}:
Optimal transport, mean partition, and uncertainty assessment in cluster
  analysis.
Statistical Analysis and Data Mining: The ASA Data Science Journal
\textbf{12}(5)
(2019)
\end{botherref}
\endbibitem

\bibitem[\protect\citeauthoryear{Flamary et~al.}{2021}]{flamary2021pot}
\begin{botherref}
\oauthor{\bsnm{Flamary}, \binits{R.}},
\oauthor{\bsnm{Courty}, \binits{N.}},
\oauthor{\bsnm{Gramfort}, \binits{A.}},
\oauthor{\bsnm{Alaya}, \binits{M.Z.}},
\oauthor{\bsnm{Boisbunon}, \binits{A.}},
\oauthor{\bsnm{Chambon}, \binits{S.}},
\oauthor{\bsnm{Chapel}, \binits{L.}},
\oauthor{\bsnm{Corenflos}, \binits{A.}},
\oauthor{\bsnm{Fatras}, \binits{K.}},
\oauthor{\bsnm{Fournier}, \binits{N.}}, et al.:
Pot: Python optimal transport.
JMLR
\textbf{22}(78)
(2021)
\end{botherref}
\endbibitem

\bibitem[\protect\citeauthoryear{Cuturi and Doucet}{2014}]{cuturi2014fast}
\begin{bchapter}
\bauthor{\bsnm{Cuturi}, \binits{M.}},
\bauthor{\bsnm{Doucet}, \binits{A.}}:
\bctitle{Fast computation of wasserstein barycenters}.
In: \bbtitle{ICML}
(\byear{2014})
\end{bchapter}
\endbibitem

\bibitem[\protect\citeauthoryear{Ayad and Kamel}{2010}]{ayad2010voting}
\begin{botherref}
\oauthor{\bsnm{Ayad}, \binits{H.G.}},
\oauthor{\bsnm{Kamel}, \binits{M.S.}}:
On voting-based consensus of cluster ensembles.
Pattern Recognition
\textbf{43}(5)
(2010)
\end{botherref}
\endbibitem

\bibitem[\protect\citeauthoryear{Blondel et~al.}{2018}]{blondel2018smooth}
\begin{bchapter}
\bauthor{\bsnm{Blondel}, \binits{M.}},
\bauthor{\bsnm{Seguy}, \binits{V.}},
\bauthor{\bsnm{Rolet}, \binits{A.}}:
\bctitle{Smooth and sparse optimal transport}.
In: \bbtitle{AISTATS}
(\byear{2018})
\end{bchapter}
\endbibitem

\bibitem[\protect\citeauthoryear{Lasalle et~al.}{2025}]{lasalle:hal-05086352v1}
\begin{botherref}
\oauthor{\bsnm{Lasalle}, \binits{E.}},
\oauthor{\bsnm{Vaudaine}, \binits{R.}},
\oauthor{\bsnm{Vayer}, \binits{T.}}:
{PASCO}.
\url{https://hal.science/hal-05086352}
\end{botherref}
\endbibitem

\bibitem[\protect\citeauthoryear{Loukas and
  Vandergheynst}{2018}]{loukas2018spectrally}
\begin{bchapter}
\bauthor{\bsnm{Loukas}, \binits{A.}},
\bauthor{\bsnm{Vandergheynst}, \binits{P.}}:
\bctitle{Spectrally approximating large graphs with smaller graphs}.
In: \bbtitle{ICML}
(\byear{2018})
\end{bchapter}
\endbibitem

\bibitem[\protect\citeauthoryear{Hu et~al.}{2020}]{hu2020ogb}
\begin{botherref}
\oauthor{\bsnm{Hu}, \binits{W.}},
\oauthor{\bsnm{Fey}, \binits{M.}},
\oauthor{\bsnm{Zitnik}, \binits{M.}},
\oauthor{\bsnm{Dong}, \binits{Y.}},
\oauthor{\bsnm{Ren}, \binits{H.}},
\oauthor{\bsnm{Liu}, \binits{B.}},
\oauthor{\bsnm{Catasta}, \binits{M.}},
\oauthor{\bsnm{Leskovec}, \binits{J.}}:
Open graph benchmark: Datasets for machine learning on graphs.
arXiv preprint arXiv:2005.00687
(2020)
\end{botherref}
\endbibitem

\bibitem[\protect\citeauthoryear{Dhillon et~al.}{2004}]{dhillon2004kernel}
\begin{bchapter}
\bauthor{\bsnm{Dhillon}, \binits{I.S.}},
\bauthor{\bsnm{Guan}, \binits{Y.}},
\bauthor{\bsnm{Kulis}, \binits{B.}}:
\bctitle{Kernel k-means: spectral clustering and normalized cuts}.
In: \bbtitle{Proceedings of the Tenth ACM SIGKDD International Conference on
  Knowledge Discovery and Data Mining}
(\byear{2004})
\end{bchapter}
\endbibitem

\bibitem[\protect\citeauthoryear{Traag et~al.}{2019}]{traag2019louvain}
\begin{botherref}
\oauthor{\bsnm{Traag}, \binits{V.A.}},
\oauthor{\bsnm{Waltman}, \binits{L.}},
\oauthor{\bsnm{Van~Eck}, \binits{N.J.}}:
From louvain to leiden: guaranteeing well-connected communities.
Scientific reports
\textbf{9}(1)
(2019)
\end{botherref}
\endbibitem

\bibitem[\protect\citeauthoryear{Newman and Girvan}{2004}]{newman2004finding}
\begin{botherref}
\oauthor{\bsnm{Newman}, \binits{M.E.}},
\oauthor{\bsnm{Girvan}, \binits{M.}}:
Finding and evaluating community structure in networks.
Physical review E
\textbf{69}
(2004)
\end{botherref}
\endbibitem

\bibitem[\protect\citeauthoryear{Rissanen}{2007}]{Rissanen:2007}
\begin{botherref}
\oauthor{\bsnm{Rissanen}, \binits{J.}}:
Information and complexity in statistical modeling
(2007)
\end{botherref}
\endbibitem

\bibitem[\protect\citeauthoryear{Liu and Nocedal}{1989}]{liu1989limited}
\begin{botherref}
\oauthor{\bsnm{Liu}, \binits{D.C.}},
\oauthor{\bsnm{Nocedal}, \binits{J.}}:
On the limited memory bfgs method for large scale optimization.
Mathematical programming
\textbf{45}(1)
(1989)
\end{botherref}
\endbibitem

\bibitem[\protect\citeauthoryear{Newman}{2010}]{newman2010networks}
\begin{botherref}
\oauthor{\bsnm{Newman}, \binits{M.E.J.}}:
Networks : an introduction
(2010)
\end{botherref}
\endbibitem

\bibitem[\protect\citeauthoryear{Quemener and
  Corvellec}{2013}]{quemener2013sidus}
\begin{barticle}
\bauthor{\bsnm{Quemener}, \binits{E.}},
\bauthor{\bsnm{Corvellec}, \binits{M.}}:
\batitle{Sidus—the solution for extreme deduplication of an operating
  system}.
\bjtitle{Linux Journal}
\bvolume{2013}(\bissue{235}),
\bfpage{3}
(\byear{2013})
\end{barticle}
\endbibitem

\bibitem[\protect\citeauthoryear{Rossi and Ahmed}{2015}]{nr}
\begin{botherref}
\oauthor{\bsnm{Rossi}, \binits{R.A.}},
\oauthor{\bsnm{Ahmed}, \binits{N.K.}}:
The network data repository with interactive graph analytics and visualization.
AAAI
(2015)
\end{botherref}
\endbibitem

\bibitem[\protect\citeauthoryear{Vinh et~al.}{2009}]{vinh2009information}
\begin{bchapter}
\bauthor{\bsnm{Vinh}, \binits{N.X.}},
\bauthor{\bsnm{Epps}, \binits{J.}},
\bauthor{\bsnm{Bailey}, \binits{J.}}:
\bctitle{Information theoretic measures for clusterings comparison: is a
  correction for chance necessary?}
In: \bbtitle{ICML}
(\byear{2009})
\end{bchapter}
\endbibitem

\end{thebibliography}

\newpage

\begin{appendices}


\section{Finding consensus between partitions \label{sec:consensus}}

In this section we describe the different approaches that we investigated for aligning different partitions into a reference partition. 
We recall that the problem reformulates as finding the partition $\clustmatbar$ that best agrees with all $(\clustermatrix_r)$ with ${r \in \integ{R}}$. 
Methods from the clustering ensemble literature already propose to solve such a problem. Mathematically, they amount to find a Frechet mean of the partitions $(\clustermatrix_r)$ for various notions of divergence $\dist$ between partitions
\begin{equation}
\label{eq:general_bary_pb}
\clustermatrix \in \underset{\clustmatbar \in \mathcal{P}_{N, \kbar} }{\arg \min}  \frac{1}{R} \sum\limits_{i=1}^R \dist(\clustermatrix_r , \clustmatbar) \,.
\end{equation}
Depending on the choice of divergence $\dist$, we can obtain different consensus methods, as detailed below. 

\paragraph{Linear regression and many-to-one} These two methods are based on a similar notion of divergence 
between partitions. Given two partitions $\clustermatrix \in \mathcal{P}_{N, k}, \clustmatbar \in \mathcal{P}_{N, \kbar}$, it is defined as
\begin{equation}
\label{eq:permut_distance} 
\dist( \clustermatrix, \clustmatbar) = \min_{Q \in \Qcal(k, \kbar)} \| \clustermatrix Q - \clustmatbar \|_{\fro}^2
\end{equation}
Depending on the choice for the set $\Qcal(k, \kbar)$ we get different methods:
\begin{itemize}
    \item When we simply set $\Qcal(k, \kbar) = \Qcal_{\operatorname{lin-reg}}(k, \kbar) = \R^{k \times \kbar}$, 
the optimal matrix $Q$ is given by (see \Cref{lemma:least_square})
\begin{equation}
\label{eq:sol_lin_reg}
Q_{\operatorname{lin-reg}} =(\clustermatrix^\top \clustermatrix)^{-1}\clustermatrix^\top\clustmatbar= \diag(\clustermatrix^\top \one_N)^{-1} \clustermatrix^\top \clustmatbar \,,
\end{equation}
where $\diag(\clustermatrix^\top \one_N)^{-1}$ 
corresponds to the diagonal matrix containing the inverse of the cluster sizes. This realignment is proposed in \cite{ayad2010voting}.
Even though it allows to compute the divergence $\dist( \clustermatrix, \clustmatbar)$, this matrix yields a ‘‘re-aligned partition'' $\clustermatrix Q_{\operatorname{lin-reg}}$ which is only a ‘‘soft-partition'', with elements in $[0,1]$. We refer to this solution as \texttt{lin-reg}.
\item If one wants to obtain a true partition (with elements in $\{0,1\}$), a solution is to restrict the matrix in $\Qcal(k, \kbar)$ to send each cluster of $\clustermatrix$ to at most one cluster of $\clustmatbar$. For that we define $\Qcal_{\operatorname{many-to-one}}(k, \kbar) = \{Q \in \{0,1\}^{k \times \kbar}, Q \one_{\kbar} = \one_{k} \}$. The solution of \eqref{eq:permut_distance} with $\Qcal_{\operatorname{many-to-one}}(k, \kbar)$ is given (see \Cref{lemma:many_to_one_sol}) by $\operatorname{row-bin}(Q_{\operatorname{lin-reg}})$, where $\operatorname{row-bin}$ is the operator that returns a binary matrix of same shape, where each row contains only zeros except at the position of the maximum in the corresponding row of the input matrix.  We refer to this solution as \texttt{many-to-one}.
\end{itemize}
\paragraph{OT-based method}
To prevent empty clusters in the realigned partition, we also consider $\Qcal(k,\kbar) = \Qcal_{\operatorname{ot}}(k,\kbar)$ in \eqref{eq:permut_distance}. With these constraints, the alignment problem becomes an OT problem which can be related to a specific \emph{quadratic} regularized OT problem \cite{blondel2018smooth} which admits efficient convex solvers, as detailed in the following lemma (proof can be found in \Cref{sec:proof}).
\begin{restatable}{lemma}{alignmentasquadot}
\label{lemma:alignmentasquadot}
Let ${\clustermatrix} \in \mathcal{P}_{N, k}$ and $D = \diag(\clustermatrix^\top \one_N)$. Then problem \eqref{eq:permut_distance} with $ \Qcal(k, \kbar) = \Qcal_{\operatorname{ot}}(k, \kbar)$ is equivalent to the problem
\begin{equation}
\label{eq:quad_ot_problem}
\min_{Q \in \Qcal_{\operatorname{ot}}(k, \kbar)} \|D^{\frac{1}{2}} Q\|_F^2 - 2 \langle C, Q \rangle\,,
\end{equation}
where $C = \clustermatrix^\top \clustmatbar$. Assuming no empty cluster in the partition $\clustermatrix$, the solution of \Cref{eq:quad_ot_problem} is unique and can be solved by considering the dual problem
\begin{equation}
\label{eq:dual_problem}
\max_{\mu \in \R^{k}, \ \nu \in \R^{\kbar}} \ \frac{1}{k} \sum_i \mu_i + \frac{1}{\kbar} \sum_j \nu_j -\frac{1}{4}\|D^{-\frac{1}{2}}[\mu \oplus \nu +2 C]_{+}\|_F^2\,,
\end{equation}
where $\mu \oplus \nu:= (\mu_i + \nu_j)_{ij}$ and for any matrix $A, \ [A]_{+}  := \left(\max\{A_{ij}, 0\}\right)_{ij}$. More precisely, the optimal solution $Q^\star$ of \Cref{eq:quad_ot_problem} can be written as $Q^{\star} = \frac{1}{2} D^{-1}[\mu^\star \oplus \nu^\star + 2 C]_{+}$ where $(\mu^\star, \nu^\star)$ is the optimal solution of \Cref{eq:dual_problem}.
\end{restatable}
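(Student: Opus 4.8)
The plan is to treat the statement as two separate claims: the equivalence of the realignment problem \eqref{eq:permut_distance} (taken over $\Qcal_{\ot}(k,\kbar)$) with the quadratic problem \eqref{eq:quad_ot_problem}, and then the derivation of the dual \eqref{eq:dual_problem} together with the primal-recovery formula via Lagrangian duality.

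For the equivalence, I would expand $\|\clustermatrix Q - \clustmatbar\|_\fro^2 = \|\clustermatrix Q\|_\fro^2 - 2\langle\clustermatrix Q,\clustmatbar\rangle + \|\clustmatbar\|_\fro^2$ and discard the last term, which does not depend on $Q$. The cross term rewrites as $\langle\clustermatrix Q,\clustmatbar\rangle = \langle Q,\clustermatrix^\top\clustmatbar\rangle = \langle C,Q\rangle$. The crucial point is that $\clustermatrix$ is a partition matrix, so each of its rows contains a single $1$; this forces $\clustermatrix^\top\clustermatrix$ to be diagonal with entries equal to the cluster sizes, i.e. $\clustermatrix^\top\clustermatrix = \diag(\clustermatrix^\top\one_N) = D$. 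Consequently $\|\clustermatrix Q\|_\fro^2 = \tr(Q^\top D Q) = \|D^{1/2}Q\|_\fro^2$, and up to the additive constant $\|\clustmatbar\|_\fro^2$ the objective is exactly the one in \eqref{eq:quad_ot_problem}.

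For the dual, I would build the Lagrangian by dualizing only the two marginal equality constraints defining $\Qcal_{\ot}(k,\kbar)$, with multipliers $\mu\in\R^k$ and $\nu\in\R^{\kbar}$, while keeping the nonnegativity constraint $Q\geq 0$ explicit. Since $D$ is diagonal and all the relaxed terms are linear in $Q$, the inner minimization decouples entrywise into scalar problems $\min_{q\geq 0}\ D_{ii}q^2 - (2C_{ij}+\mu_i+\nu_j)q$. Each is a convex quadratic whose constrained minimizer is the thresholded expression $q^\star_{ij} = \tfrac{1}{2D_{ii}}[\mu_i+\nu_j+2C_{ij}]_+$; this is precisely the origin of the $[\cdot]_+$ operator and of the stated closed form $Q^\star = \tfrac12 D^{-1}[\mu\oplus\nu+2C]_+$. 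Substituting back, each optimized scalar term equals $-\tfrac{1}{4D_{ii}}[\mu_i+\nu_j+2C_{ij}]_+^2$, and summing these reproduces exactly the $-\tfrac14\|D^{-1/2}[\mu\oplus\nu+2C]_+\|_\fro^2$ term, so the dual function coincides with \eqref{eq:dual_problem}.

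Finally I would verify the regularity conditions that make these computations legitimate. The hypothesis of no empty cluster gives $D_{ii}>0$ for all $i$, hence $D\succ 0$ and the primal objective is strictly convex, which yields uniqueness of $Q^\star$. The uniform coupling $Q_{ij}=1/(k\kbar)$ belongs to $\Qcal_{\ot}(k,\kbar)$ with strictly positive entries, so Slater's condition holds and strong duality applies; the associated KKT stationarity condition then identifies the Lagrangian minimizer at the optimal dual pair $(\mu^\star,\nu^\star)$ with the primal optimum, delivering the recovery formula. I expect the only real work to be the careful entrywise bookkeeping of the minimization with its active/inactive nonnegativity constraint, rather than any conceptual obstacle, since the problem is a strictly convex quadratic program possessing a Slater point.
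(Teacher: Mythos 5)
Your proof is correct, and it reaches the dual \eqref{eq:dual_problem} by a genuinely different route than the paper. The paper proves a more general statement, $\min_{Q \in \Qcal_{\ot}(k,\kbar)} \langle M, Q\rangle + \tfrac{\gamma}{2}\|L^{1/2}Q\|_F^2$ for a symmetric positive definite $L$, and forms the \emph{full} Lagrangian: it introduces a multiplier matrix $\Gamma \geq 0$ for the non-negativity constraints in addition to $(\mu,\nu)$ for the marginals, solves the unconstrained stationarity condition for $Q$, and then eliminates $\Gamma$ by an explicit projection argument ($\min_{\Gamma \geq 0}\|L^{-1/2}(\Gamma - A)\|_F^2$ is solved by $\Gamma = [A]_+$, using the identity $[A]_+ - A = [-A]_+$); specializing $M = -2C$, $\gamma = 2$, $L = D$ gives the lemma. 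You instead dualize \emph{only} the two marginal constraints, keep $Q \geq 0$ inside the inner minimization, and exploit the fact that $D$ is diagonal so the inner problem separates into scalar minimizations $\min_{q\geq 0} D_{ii}q^2 - (2C_{ij}+\mu_i+\nu_j)q$, whose thresholded solutions produce the $[\cdot]_+$ operator and the recovery formula directly. The two duals coincide (partial and full dualization agree here), so there is no discrepancy. Your route buys two things: it is more elementary (no matrix multiplier, no projection lemma), and it sidesteps a delicate step in the paper's general argument, where passing from $\|L^{-1/2}(\Gamma - A)\|_F^2$ to $\|\Delta^{-1/2}(\Gamma - A)\|_F^2$ via $L = U\Delta U^\top$ tacitly requires $L$ to be diagonal, since the constraint $\Gamma \geq 0$ is not invariant under the rotation $U$ — harmless for the application ($L = D$ is diagonal), but your separability argument makes the role of diagonality explicit rather than implicit. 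You are also more careful than the paper on two points it leaves implicit: the reduction of \eqref{eq:permut_distance} to \eqref{eq:quad_ot_problem} via $\clustermatrix^\top\clustermatrix = D$ (the paper relies on the computation already done for its least-squares lemma), and the justification of strong duality plus primal recovery (Slater's condition with the uniform coupling, then uniqueness of the inner minimizer by strict convexity). What the paper's approach buys in exchange is the generality of the statement for arbitrary positive definite $L$ and generic cost $M$, which supports its remark connecting the problem to Mahalanobis-regularized quadratic OT.
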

Building upon this result we can solve \Cref{eq:quad_ot_problem} by tackling the dual \Cref{eq:dual_problem} which is a convex unconstrained problem of two variables (maximization of a concave function). This expression allow us to use any convex solver, and, as suggested in \cite{blondel2018smooth}, we rely on L-BFGS \cite{liu1989limited} that we find particularly effective in practice. We call this alignment procedure \texttt{quad-ot} which has roughly a $\Ocal(N K^2)$ complexity. 
\begin{remark}
The only difference between \Cref{eq:quad_ot_problem} and standard quadratic OT problem of the form $\min_Q \langle M, Q\rangle + \frac{\gamma}{2} \|Q\|_F^2$ is that in our case the regularization term is $\|D^{1/2}Q\|_F^2$. This is equivalent to consider a Mahalanobis type regularization instead of a $\ell^2_2$ one.
\end{remark}
\paragraph{Solving for the barycenter}
To solve the barycenter problem in \eqref{eq:general_bary_pb} with a distance of the form of \eqref{eq:permut_distance}, we alternate between finding the alignment matrices $Q_r$ as explained above and updating the reference $\clustmatbar$ as in \eqref{eq:fusion_update_ref}. 
This corresponds to an alternating minimization algorithm, where we alternate between (i) realigning the $R$ partitions on the reference and (ii) updating the reference. The reference update is based on \Cref{lemma:fusion_update} which states that finding the closest partition matrix to a set a (realigned) matrices is given by the majority-vote update, see Eq.~\eqref{eq:fusion_update_ref}.

\section{Other edge sampling rules}
\label{app:edge_sampling}

In this section, we present edge sampling rules that we came up with while trying to design an fast coarsening procedure. They were not satisfying but we choose to present them here and explain their drawbacks, as we believe it is informative to better understand the coarsening algorithm we propose in the end. To the best of our knowledge, these strategies were not considered systematically in previous works to coarsen graphs.
\paragraph{Uniform edge sampling} This approach might be the most natural one. It consists in choosing edges uniformly at random in the graph and collapsing them. Doing so favors edges incident to high degree nodes, resulting after collapsing to an even higher degree hypernode. This amplification phenomenon yields an unbalanced final coarsened graph that contains one huge hypernode and all other hypernodes containing only a few nodes. This is problematic as it would not express well the community structure of the initial graph. 

\paragraph{Uniform edge sampling with marked neighboring edges} This approach fixes the above issue. To avoid collapsing onto almost always the same hypernode, when an edge $(i,j)$ is collapsed, we mark the edges incident to nodes $i$ and $j$, and we sample edges uniformly at random among unmarked edges. This solves the issue of unbalancedness. However, the algorithm quickly runs out of unmarked edges and forces the early computation of the next current coarsened graph. 
While this can be done with sparse matrix products between the adjacency matrix and the matrix encoding the composition of the hypernodes, it remains costly and should be avoided as much as possible. 

\paragraph{Uniform edge sampling with marked visited nodes} Here, we want to relax the limit imposed by the previous approach with marked edges. First, recall that sampling an edge uniformly at random is equivalent to sampling a node $i$ with a probability proportional to its degree and sampling a neighbor $j$ uniformly, see \eg, \cite[Section 6.14]{newman2010networks}. So instead of discarding edge $(i,j)$ whenever either $i$ or $j$ has been used in a previous collapse, a natural relaxation is to reject the edge $(i,j)$ only when $i$ has been previously involved in a collapse, {\em irrespective of whether $j$ was involved or not in such a collapse}. Simulations showed that when using this sampling strategy (in step 6 of \Cref{alg:coarsening_one}) to generate coarsening tables $h^\ell$, the hypernodes size distribution was similar to the one with the edge marking strategy, but resulted in much less intermediate coarsened graph reconstructions. 

\paragraph{Uniform  node sampling with marked visited nodes} A final improvement to speedup the sampling procedure is to sample the first node $i$ {\em uniformly at random} among non-visited nodes instead of according to its degree (the second node $j$ being still draw uniformly among the neighbors of $i$). Computationally, this avoids updating the degrees after each collapse and further speeds up the coarsening procedure. The impact of sampling uniformly instead of according to the degrees stays limited thanks to the friendship paradox. This results in \Cref{alg:coarsening_one}.

\addblocknew{
\section{Complexity of the coarsening phase in PASCO \label{app:complexity}}
\begin{lemma}
    The complexity of \Cref{alg:coarsening_one} is $\Ocal(n^{(\ell)} + N_{\operatorname{while}}^{(\ell)} + E^{\ell} )$ where $N_{\operatorname{while}}^{(\ell)}$ is the number of time the while loop is being executed and $E^{(\ell)}$ is the number of non-zero coefficients in the adjacency matrix $A^{(\ell)}$.
    \label{lem:cplx_algo2}
\end{lemma}

\begin{proof}
    The initializations of $V_a$ and $h^{(\ell)}$ cost $\Ocal(n^{(\ell)})$. Drawing the nodes $u$ and $v$ costs $\Ocal(1)$ as they are sampled uniformly at random. Updating $h^{(\ell)}$ (step 7) has complexity $\Ocal(1)$. Similarly, updating $V_a$ can be performed in $\Ocal(1)$ by updating a boolean array that keeps tracks of which nodes are in $V_a$. Moreover, relabeling $h^{(\ell)}$ has complexity $\Ocal(n^{(\ell)})$. Finally, computing $A^{(\ell+1)}$ can be performed in $E^{(\ell)}$ by iterating over the non-zero coefficients of $A^{(\ell)}$. Summarizing, we have that the complexity of \Cref{alg:coarsening_one} is $\Ocal(n^{(\ell)} + N_{\operatorname{while}}^{(\ell)} +  E^{\ell})$. 
\end{proof}

Now we prove \Cref{prop:cplx_algo1}.
\begin{proof}[Proof of \Cref{prop:cplx_algo1}]
    The initialization of \Cref{alg:coarsening_global} (essentially creating $h$) has complexity $\Ocal(N)$. 
    Now let us denote by $N_{rep}$ the number of times the \texttt{while} loop is repeated. According to Lemma~\ref{lem:cplx_algo2}, the overall complexity of the \texttt{while} loop in \Cref{alg:coarsening_global} is $\Ocal( N_{rep} ( N + |E| ) )$. This is obtained by using upperbounds of $n^{(\ell)}$ and $N_{\operatorname{while}}^{(\ell)}$ by $N$, and $E^{\ell}$ by $|E|$ where $|E|$ is the number of non-zero coefficient in the initial adjacency matrix $\bigA$.  Without loss of generality, we can always assume that the graph has no isolated node (the clustering problem being irrelevant for these nodes), therefore the complexity of the while loop reduces to $\Ocal( N_{rep} |E|  )$.

    Now let us prove that $N_{rep} \leq 1 + \log \rho$. 
    In \Cref{alg:coarsening_one}, remark that the \texttt{while} loop may stop for two reasons. First, the target size is reached, meaning that $N_{\operatorname{while}}^{(\ell)} = n^{(\ell)}-n$. This only happens for $\ell = N_{rep} -1$. Second, the set of available nodes $V_a$ is empty. In this case, knowing that at each iteration we can remove either 1 or 2 elements from $V_a$ (except at the first iteration where 2 are removed), we have $n^{(\ell)} / 2 \leq  N_{\operatorname{while}}^{(\ell)} \leq n^{(\ell)} -1$. This is the case for every iteration $\ell \leq N_{rep} -2$.
    Thus, for all $\ell \leq N_{rep} -2$, we have that $n^{(\ell+1)} = n^{(\ell)} - N_{\operatorname{while}}^{(\ell)} \leq  n^{(\ell)} / 2$. Therefore, for all $\ell \leq N_{rep}-1$, we have $n^{(\ell)} \leq (1/2)^{\ell} N$ (as $n^{(0)}=N$).
    Now, remark that $n^{(N_{rep}-1)} > n$, otherwise the coarsening would already have stopped. Recalling that $n = \lfloor \rho^{-1} N \rfloor$ we have $\rho^{-1} N \leq (1/2)^{N_{rep}-1} N$, yielding $N_{rep} \leq 1 + \log \rho$.   
\end{proof}
}

\section{Relegated theoretical results \label{sec:proof}}

\begin{lemma}
\label{lemma:least_square}
Let ${\clustermatrix} \in \mathcal{P}_{N, k}$ be a partition matrix (\Cref{def:partition_matrix}), and assume that $\forall i \in \integ{k}, [{\clustermatrix}^\top \one_N]_i \neq 0$. Then, for any integer $\kbar$ and any matrix $\clustmatbar \in \R^{N \times \kbar}$, with $\Qcal(k, \kbar) = \R^{k \times \kbar}$,  $Q^\star = \diag({P}^\top \one_N)^{-1}{\clustermatrix}^\top \overline{\clustermatrix}$ is an optimal solution to problem \eqref{eq:permut_distance}.
\end{lemma}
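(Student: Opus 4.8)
The plan is to observe that, once $\Qcal(k,\kbar)$ is taken to be the whole space $\R^{k\times\kbar}$, problem \eqref{eq:permut_distance} is simply an unconstrained linear least-squares problem: minimize the convex quadratic $f(Q) = \|\clustermatrix Q - \clustmatbar\|_\fro^2$ over all $Q$. Since $f$ is convex and differentiable, it suffices to exhibit a point at which its gradient vanishes; any such point is automatically a global minimizer, so verifying that the claimed $Q^\star$ is a stationary point finishes the argument.

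First I would compute the gradient. Writing $f(Q) = \tr\big((\clustermatrix Q - \clustmatbar)^\top(\clustermatrix Q - \clustmatbar)\big)$ and differentiating yields $\nabla f(Q) = 2\,\clustermatrix^\top(\clustermatrix Q - \clustmatbar)$, so stationarity is exactly the normal equation $\clustermatrix^\top\clustermatrix\,Q = \clustermatrix^\top\clustmatbar$. The only structural fact I would then need is that $\clustermatrix^\top\clustermatrix$ is \emph{diagonal}. This follows directly from the definition of a partition matrix (\Cref{def:partition_matrix}): each row of $\clustermatrix$ has a single nonzero entry, equal to $1$, so $(\clustermatrix^\top\clustermatrix)_{ij} = \sum_{n} \clustermatrix_{ni}\clustermatrix_{nj}$ vanishes whenever $i\neq j$ and equals the number of nodes assigned to cluster $i$ when $i=j$. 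In other words, $\clustermatrix^\top\clustermatrix = \diag(\clustermatrix^\top\one_N)$, the diagonal matrix of cluster sizes.

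It then remains to invert. Here the hypothesis $[\clustermatrix^\top\one_N]_i \neq 0$ for every $i$ --- that no cluster is empty --- is precisely what guarantees that $\diag(\clustermatrix^\top\one_N)$ is invertible. Multiplying the normal equation on the left by this inverse gives the unique stationary point $Q^\star = (\clustermatrix^\top\clustermatrix)^{-1}\clustermatrix^\top\clustmatbar = \diag(\clustermatrix^\top\one_N)^{-1}\clustermatrix^\top\clustmatbar$, which by convexity of $f$ is a global optimum of \eqref{eq:permut_distance}. There is no genuine obstacle in this argument; the only points requiring care are the diagonal identity $\clustermatrix^\top\clustermatrix = \diag(\clustermatrix^\top\one_N)$ and the remark that the nonempty-cluster assumption is exactly what is needed for the closed form to be well defined (without it the normal equation still holds, but $\clustermatrix^\top\clustermatrix$ is singular and the stated expression is no longer meaningful).
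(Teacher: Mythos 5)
Your proposal is correct and follows essentially the same route as the paper's proof: both expand the convex quadratic $\|\clustermatrix Q - \clustmatbar\|_\fro^2$, exploit the partition-matrix structure to get $\clustermatrix^\top\clustermatrix = \diag(\clustermatrix^\top\one_N)$, and conclude by setting the gradient to zero. Your write-up is in fact slightly more explicit than the paper's, spelling out the normal equations and noting that the nonempty-cluster hypothesis is exactly what makes the diagonal matrix invertible.
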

\begin{proof}
Denoting $f(Q) = \|{\clustermatrix} Q-\overline{\clustermatrix}\|_F^2$. Since ${\clustermatrix}$ is a partiton matrix its columns have pairwise disjoint support and we have $\clustermatrix^\top\clustermatrix=\diag({\clustermatrix^\top}\one_N)$ hence
\begin{equation}
f(Q) = \|\overline{\clustermatrix}\|_F^2 -2 \langle Q, {\clustermatrix}^\top \overline{\clustermatrix}\rangle + \tr(Q^\top {\clustermatrix}^\top {\clustermatrix} Q) = \|\overline{\clustermatrix}\|_F^2 + \|\diag({\clustermatrix}^\top \one_N)^{\frac{1}{2}}Q\|_F^2- 2 \langle Q, {\clustermatrix}^\top \overline{\clustermatrix}\rangle\,.
\end{equation}
The optimization problem is convex, setting the gradient of $f$ to zero gives the solution.
\end{proof}

\begin{lemma}
\label{lemma:many_to_one_sol}
Let ${\clustermatrix} \in \mathcal{P}_{N, k}$ be a partition matrix (\Cref{def:partition_matrix}), $\overline{\clustermatrix} \in \R^{N \times \overline{k}}$,  and $\Qcal(k, \kbar) = \{Q \in \{0,1\}^{k \times \overline{k}}: Q\one_{\overline{k}} = \one_k\}$. Then $Q^{\star}$ defined by
\begin{equation}
\forall i \in \integ{k}, Q^{\star}_{ij} = \begin{cases} 1 & j \in \operatorname{argmax}_{p \in \integ{\overline{k}}} [{\clustermatrix}^\top \overline{\clustermatrix}]_{i p} \\ 0 & \text{otherwise}  \end{cases}\,,
\end{equation}
is an optimal solution to problem \eqref{eq:permut_distance}.
\end{lemma}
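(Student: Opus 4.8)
The plan is to build directly on the expansion of the objective established in the proof of \Cref{lemma:least_square}. Since $\clustermatrix$ is a partition matrix we have $\clustermatrix^\top \clustermatrix = \diag(\clustermatrix^\top \one_N)$, so for every $Q$,
\begin{equation*}
\|\clustermatrix Q - \clustmatbar\|_F^2 = \|\clustmatbar\|_F^2 + \|\diag(\clustermatrix^\top \one_N)^{\frac{1}{2}} Q\|_F^2 - 2\langle Q, \clustermatrix^\top \clustmatbar\rangle\,.
\end{equation*}
Writing $d_i := [\clustermatrix^\top \one_N]_i$ (the size of the $i$-th cluster of $\clustermatrix$) and $C := \clustermatrix^\top \clustmatbar$, the first term is a constant and the objective splits into a quadratic and a linear part in $Q$.

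The crucial step I would isolate is that, on the feasible set $\Qcal_{\operatorname{many-to-one}}(k,\kbar)$, the quadratic term is \emph{also} constant. Indeed, any feasible $Q$ is binary, so $Q_{ij}^2 = Q_{ij}$, and its rows sum to one, so $\sum_j Q_{ij} = 1$ for each $i$; hence $\|\diag(\clustermatrix^\top \one_N)^{1/2} Q\|_F^2 = \sum_{i,j} d_i Q_{ij}^2 = \sum_{i,j} d_i Q_{ij} = \sum_i d_i$, independent of $Q$. This reduces the minimization to the \emph{linear} problem $\max_{Q \in \Qcal_{\operatorname{many-to-one}}(k,\kbar)} \langle Q, C\rangle$.

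I would then conclude by exploiting that the only coupling imposed by $\Qcal_{\operatorname{many-to-one}}$ is within each row (exactly one nonzero entry per row), so $\langle Q, C\rangle = \sum_i \sum_j Q_{ij} C_{ij}$ separates into $k$ independent scalar problems. For each row $i$, the best choice places its single $1$ at a column $j \in \operatorname{argmax}_{p} C_{ip}$, yielding the row value $\max_p C_{ip}$ and therefore the global maximum $\sum_i \max_p C_{ip}$. The matrix $Q^\star$ of the statement does exactly this, which proves its optimality. When the argmax in some row is not a singleton, $Q^\star$ should be read as any feasible selection of one maximizing column per row; every such selection attains the same optimum.

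The only genuine obstacle is the middle step: recognizing that binarity together with the row-stochastic constraint collapses the quadratic penalty $\|\diag(\clustermatrix^\top \one_N)^{1/2} Q\|_F^2$ to a constant. Once this is observed, the problem is a row-separable linear assignment and the rest is immediate.
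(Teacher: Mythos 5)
Your proof is correct and follows essentially the same route as the paper's: expand $\|\clustermatrix Q - \clustmatbar\|_F^2$ using $\clustermatrix^\top\clustermatrix = \diag(\clustermatrix^\top\one_N)$, observe that binarity ($Q_{ij}^2 = Q_{ij}$) together with the row-sum constraint makes the quadratic term constant (the paper notes it equals $N$), and then solve the remaining row-separable linear problem $\max_Q \langle Q, \clustermatrix^\top\clustmatbar\rangle$ by picking an argmax column per row. Your remark about reading $Q^\star$ as one selected maximizer per row when ties occur is a small clarification the paper leaves implicit, but otherwise the two arguments coincide.
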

\begin{proof}
Denoting $f(Q) = \|\clustermatrix Q-\overline{\clustermatrix}\|_F^2$ and using that ${\clustermatrix}$ is a partition matrix, that $Q_{ij} \in \{0,1\}$ (hence $Q_{ij}^2=Q_{ij}$) and $Q \one_{\overline{k}} = \one_k$ we can rewrite $f$ as
\begin{equation}
\begin{split}
f(Q) &= \|\overline{\clustermatrix}\|_F^2 + \|\diag({\clustermatrix}^\top \one_N)^{\frac{1}{2}}Q\|_F^2- 2 \langle Q, {\clustermatrix}^\top \overline{\clustermatrix}\rangle \\
&= \|\overline{\clustermatrix}\|_F^2 + \sum_{ij} [{\clustermatrix}^\top \one_N]_{i} Q_{ij}^2 - 2 \langle Q, {\clustermatrix}^\top \overline{\clustermatrix}\rangle \\
&= \|\overline{\clustermatrix}\|_F^2 + \sum_{i=1}^{k} \sum_{j=1}^{\overline{k}} [{\clustermatrix}^\top \one_N]_{i} Q_{ij} - 2 \langle Q, {\clustermatrix}^\top \overline{\clustermatrix}\rangle \\
&= \|\overline{\clustermatrix}\|_F^2 + \sum_{i=1}^{k}  [{\clustermatrix}^\top \one_N]_{i} (\sum_{j=1}^{\overline{k}} Q_{ij}) - 2 \langle Q, {\clustermatrix}^\top \overline{\clustermatrix}\rangle \\
&= \|\overline{\clustermatrix}\|_F^2 + \sum_{i=1}^{k}  [{\clustermatrix}^\top \one_N]_{i} - 2 \langle Q, {\clustermatrix}^\top\overline{\clustermatrix}\rangle \\
&= \|\overline{\clustermatrix}\|_F^2 + N - 2 \langle Q, {\clustermatrix}^\top\overline{\clustermatrix}\rangle 
\end{split}
\end{equation}
Denoting $C = - {\clustermatrix}^\top \overline{\clustermatrix}$, a solution to problem \eqref{eq:permut_distance} can thus be found by solving 
\begin{equation}
\label{eq:semi_relaxeproblem}
\min_{Q \in \{0,1\}^{k \times \overline{k}}: Q \one_{\overline{k}} = \one_k} \ \langle Q, C \rangle\,.
\end{equation}
Now \Cref{eq:semi_relaxeproblem} is an optimization problem that decouples with respect to the rows of $Q$, \ie\ there are $k$ independent problems per row of $Q$. For each row $i \in \integ{k}$, a solution can be found by choosing any column 
index 
$j$ such that $j \in \operatorname{argmin}_{p \in \integ{\overline{k}}} C_{ip}$ 
This condition is equivalent to find $j$ such that $j \in \operatorname{argmax}_{p \in \integ{\overline{k}}} [{\clustermatrix}^\top \overline{\clustermatrix}]_{i p}$.
\end{proof}

\begin{lemma}
\label{lemma:fusion_update}
Let $\clustermatrix^{(1)}, \cdots, \clustermatrix^{(R)}$ where each $\clustermatrix^{(r)} \in \R^{N \times \overline{k}}$. 
Then a solution to  
\begin{equation}
\label{eq:barycenter_problem_in_proof}
\underset{\clustmatbar \in \mathcal{P}_{N, \kbar}}{\operatorname{min}} \ \frac{1}{R} \ \sum_{r=1}^{R} \|\clustmatbar - \clustermatrix^{(r)}\|_F^2\,.
\end{equation}
is given by
\begin{equation}
\forall i \in \integ{N}, \ [\overline{\clustermatrix}]_{ij} = \begin{cases} 1 & j \in \underset{p \in \integ{\overline{k}}}{\operatorname{argmax}} \ [\sum_{r=1}^{R} \clustermatrix^{(r)}]_{ip} \\ 0 & \text{otherwise}  \end{cases}
\end{equation}
Now let $\clustermatrix^{(1)}, \cdots, \clustermatrix^{(R)}$ where each $\clustermatrix^{(r)} \in \R^{N \times k_r}$ and $Q^{(1)}, \cdots, Q^{(R)}$ be coupling matrices such that each $Q^{(r)} \in \Qcal_{\ot}(k_r, \kbar)$.
Then a solution to 
\begin{equation}
\label{eq:barycenter_ot_problem_in_proof_2}
\underset{\clustmatbar \in \mathcal{P}_{N, \kbar}}{\operatorname{min}} \ \frac{1}{R} \ \sum_{r=1}^{R} \sum_{i,j=1}^{k, \kbar} \|P^{(r)}_{:,i} - \overline{P
}_{:,j}\|_2^2 Q^{(r)}_{i,j}\,.
\end{equation}
is given by
\begin{equation}
\forall i \in \integ{N}, \ [\overline{\clustermatrix}]_{ij} = \begin{cases} 1 & j \in \underset{p \in \integ{\overline{k}}}{\operatorname{argmax}} \ [\sum_{r=1}^{R} \clustermatrix^{(r)} Q^{(r)}]_{ip} \\ 0 & \text{otherwise}  \end{cases}
\end{equation}
\end{lemma}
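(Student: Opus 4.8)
The plan is to treat both claims uniformly: expand the squared norms, discard every term that is constant in $\clustmatbar$, and reduce each problem to maximizing a linear functional $\langle \clustmatbar, M\rangle$ over $\Pcal_{N,\kbar}$, which decouples over the rows of $\clustmatbar$ and is solved row-by-row by an argmax. The structural fact I would exploit throughout is that any $\clustmatbar \in \Pcal_{N,\kbar}$ satisfies $\|\clustmatbar\|_\fro^2 = N$, since each of its $N$ rows contains exactly one nonzero entry equal to $1$; hence this quadratic term never influences the minimizer.

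For the first statement I would expand
\begin{equation*}
\sum_{r=1}^R \|\clustmatbar - \clustermatrix^{(r)}\|_\fro^2 = R\,\|\clustmatbar\|_\fro^2 - 2\Big\langle \clustmatbar, \sum_{r=1}^R \clustermatrix^{(r)}\Big\rangle + \sum_{r=1}^R \|\clustermatrix^{(r)}\|_\fro^2 .
\end{equation*}
The first term equals $RN$ and the last does not depend on $\clustmatbar$, so minimizing is equivalent to maximizing $\langle \clustmatbar, \sum_r \clustermatrix^{(r)}\rangle = \sum_{i=1}^N \sum_{j} [\clustmatbar]_{ij}\,[\sum_r \clustermatrix^{(r)}]_{ij}$. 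Because each row of $\clustmatbar$ carries a single $1$, this sum separates over rows and is maximized, for each $i$, by placing that $1$ in any column $j$ attaining $\max_p [\sum_r \clustermatrix^{(r)}]_{ip}$, which is precisely the stated majority-vote rule.

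For the second statement I would first simplify the OT cost and then reuse the same argument. Expanding $\|\clustermatrix^{(r)}_{:,i} - \clustmatbar_{:,j}\|_2^2 = \|\clustermatrix^{(r)}_{:,i}\|_2^2 - 2\langle \clustermatrix^{(r)}_{:,i}, \clustmatbar_{:,j}\rangle + \|\clustmatbar_{:,j}\|_2^2$ and weighting by $Q^{(r)}_{ij}$, the three resulting sums collapse through the coupling marginals. The first equals $\tfrac{1}{k_r}\sum_i \|\clustermatrix^{(r)}_{:,i}\|_2^2$ (via $Q^{(r)}\one_{\kbar} = \tfrac{1}{k_r}\one_{k_r}$) and is constant; the third equals $\tfrac{1}{\kbar}\|\clustmatbar\|_\fro^2 = N/\kbar$ (via $(Q^{(r)})^\top\one_{k_r} = \tfrac{1}{\kbar}\one_{\kbar}$ together with $\|\clustmatbar\|_\fro^2 = N$) and is again constant. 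The cross term rearranges, using $\sum_i \clustermatrix^{(r)}_{\ell i}Q^{(r)}_{ij} = [\clustermatrix^{(r)}Q^{(r)}]_{\ell j}$, into $-2\langle \clustmatbar, \clustermatrix^{(r)} Q^{(r)}\rangle$. Summing over $r$, the minimization reduces to maximizing $\langle \clustmatbar, \sum_r \clustermatrix^{(r)} Q^{(r)}\rangle$ over $\Pcal_{N,\kbar}$, and the identical row-wise argmax argument yields the claimed formula.

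The only genuine subtlety — and the step I would check most carefully — is the cancellation of the term quadratic in $\clustmatbar$ in the second part: unlike in the first statement it does not vanish by inspection, but only after invoking the marginal constraint $(Q^{(r)})^\top\one_{k_r} = \tfrac{1}{\kbar}\one_{\kbar}$ in combination with $\|\clustmatbar\|_\fro^2 = N$. Everything else is bookkeeping; I would also remark that the argmax rule always produces a feasible element of $\Pcal_{N,\kbar}$ (ties broken arbitrarily), so the proposed $\clustmatbar$ is indeed admissible and hence optimal.
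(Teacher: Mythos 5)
Your proof is correct and follows essentially the same route as the paper's: both reduce each problem to maximizing a linear functional $\langle \clustmatbar, \sum_r \clustermatrix^{(r)}\rangle$ (resp. $\langle \clustmatbar, \sum_r \clustermatrix^{(r)} Q^{(r)}\rangle$) over $\Pcal_{N,\kbar}$ by expanding the squares, using $\|\clustmatbar\|_\fro^2 = N$ and the coupling marginal constraints to discard all terms constant in $\clustmatbar$, and then solving row by row via the argmax. The only cosmetic difference is that the paper delegates the row-decoupling step to the proof of its many-to-one lemma, whereas you argue it inline.
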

\begin{proof}
For the first point, take $\clustmatbar$ in the constraints. Since it is a partition matrix we have $\|\clustmatbar\|_F^2 = \sum_{ij} \clustmatbar_{ij}^2 = \sum_{ij} \clustmatbar_{ij} = \one_N^\top  \clustmatbar \one_{\kbar} =  N$. 
Thus problem \eqref{eq:barycenter_problem_in_proof} is equivalent to 
\begin{equation}
\underset{\clustmatbar \in \mathcal{P}_{N, \kbar}}{\operatorname{min}} \  \langle \clustmatbar, - \sum_{r=1}^{R} \clustermatrix^{(r)}\rangle\,.
\end{equation}
As detailed in the proof of \Cref{lemma:many_to_one_sol} a solution can be found by choosing the index of the column $j$ such that $j \in \operatorname{argmin}_{p \in \integ{\overline{k}}} [- \sum_{r=1}^{R} \clustermatrix^{(r)}]_{ip}$ which concludes the proof for the first point.
For the second point, we use that $\clustmatbar$ is a partition matrix and $Q^{(r)}$ are coupling matrices so that
\begin{equation}
    \begin{split}
        \sum_{r=1}^{R} \sum_{i,j=1}^{k, \kbar} \|P^{(r)}_{:,i} - \clustmatbar_{:,j}\|_2^2 Q^{(r)}_{i,j} &= \sum_{r=1}^{R} \sum_{i,j=1}^{k, \kbar} (\|P^{(r)}_{:,i}\|_2^2 - 2 \langle P^{(r)}_{:,i}, \clustmatbar_{:,j}\rangle  + \|\clustmatbar_{:,j}\|_2^2)Q^{(r)}_{i,j} \\
        &=\text{cte} - 2 \sum_{r=1}^{R} \sum_{i,j=1}^{k, \kbar} \langle P^{(r)}_{:,i}, \clustmatbar_{:,j}\rangle Q^{(r)}_{i,j} + \sum_{r=1}^{R} \sum_{i,j=1}^{k, \kbar} \|\clustmatbar_{:,j}\|_2^2 Q^{(r)}_{i,j} \\
        &=\text{cte} - 2 \sum_{r=1}^{R} \langle \clustmatbar, P^{(r)} Q^{(r)} \rangle  + \sum_{r=1}^{R} \sum_{j=1}^{\kbar} \|\clustmatbar_{:,j}\|_2^2 \sum_{i=1}^{k} Q^{(r)}_{i,j} \\
        &=\text{cte} - - 2 \sum_{r=1}^{R} \langle \clustmatbar, P^{(r)} Q^{(r)} \rangle + \frac{R}{\kbar}\|\clustmatbar\|_\fro^2\,.
    \end{split}
\end{equation}
Using that $\|\clustmatbar\|_\fro^2 = N$ as previously proved, we get that the problem is equivalent to 
\begin{equation}
\underset{\clustmatbar \in \mathcal{P}_{N, \kbar}}{\operatorname{min}} \  \langle \clustmatbar, - \sum_{r=1}^{R} P^{(r)} Q^{(r)}\rangle\,,
\end{equation}
hence the result.
\end{proof}

\alignmentasquadot*
\begin{proof}
We will prove a slightly more general result by considering the problem
\begin{equation}
\label{eq:general_quad_ot}
\min_{Q \in \Qcal_{\ot}(k_r, \kbar)} \ \langle M, Q \rangle + \frac{\gamma}{2}\|L^{\frac{1}{2}} Q\|_F^2\,,
\end{equation}
where $M \in \R^{p \times p_{\reff}}, \gamma > 0$ and $L$ is a symmetric positive definite matrix. 
We note $a = \frac{1}{k} \one_k, b = \frac{1}{\kbar} \one_{\kbar}$.
We will then apply to $M = -2C, \gamma = 2$ and $L = D$ which is symmetric positive definite when there is no empty clusters (since in this case $\forall i \in \integ{K}, D_{ii} \neq 0$). 
Most of our calculus are adapted from \cite{blondel2018smooth}. 
First, since $L$ is a symmetric positive definite matrix, the problem \Cref{eq:general_quad_ot} is a strongly convex problem, thus it admits a unique solution.

To look at the dual of \Cref{eq:general_quad_ot}, we consider the Lagrangian 
\begin{equation}
\begin{split}
\Lcal(Q, \mu, \nu, \Gamma) &= \langle M, Q \rangle + \frac{\gamma}{2}\|L^{\frac{1}{2}} Q\|_F^2 + \langle \mu, a - Q\one_k \rangle + \langle \nu, b - Q^\top \one_{\overline{k}} \rangle - \langle \Gamma, Q\rangle \\
&= \langle M, Q \rangle + \frac{\gamma}{2}\|L^{\frac{1}{2}} Q\|_F^2 - \langle Q, \mu \one_{\overline{k}}^\top + \one_k \nu^\top + \Gamma \rangle  + \langle \mu, a \rangle + \langle \nu, b\rangle\,,
\end{split}
\end{equation}
where $\Gamma$ is the variable accounting for the non-negativity constraints on $Q$. We have
\begin{equation}
\nabla_Q \Lcal(Q, \mu, \nu, \Gamma) =  0 \iff M + \gamma L Q - \mu \oplus \nu - \Gamma = 0\,.
\end{equation}
This is statisfied when $Q = Q^\star = \frac{1}{\gamma} L^{-1}(\Gamma+ \mu \oplus \nu - M)$. Moreover,
\begin{equation}
\begin{split}
\langle M, Q^\star \rangle - \langle Q^\star, \mu \otimes \nu + \Gamma \rangle &= -\langle Q^\star, \mu \otimes \nu + \Gamma -M\rangle \\
&=-\langle \frac{1}{\gamma} L^{-1}(\Gamma+ \mu \oplus \nu - M), \mu \otimes \nu + \Gamma -M\rangle \\
&= -\frac{1}{\gamma} \|L^{-\frac{1}{2}}(\Gamma+ \mu \oplus \nu - M)\|_F^2\,.
\end{split}
\end{equation}
Thus
\begin{equation}
\begin{split}
\langle M, Q^\star \rangle - \langle Q^\star, \mu \otimes \nu + \Gamma \rangle + \frac{\gamma}{2} \|L^{\frac{1}{2}} Q^\star\|_F^2 &= -\frac{1}{\gamma} \|L^{-\frac{1}{2}}(\Gamma+ \mu \oplus \nu - M)\|_F^2 \\
&+ \frac{\gamma}{2} \|L^{\frac{1}{2}} (\frac{1}{\gamma} L^{-1}(\Gamma+ \mu \oplus \nu - M))\|_F^2\\
&=-\frac{1}{\gamma} \|L^{-\frac{1}{2}}(\Gamma+ \mu \oplus \nu - M)\|_F^2 \\
&+ \frac{1}{2\gamma} \|L^{-\frac{1}{2}}(\Gamma+ \mu \oplus \nu - M)\|_F^2 \\
&=-\frac{1}{2\gamma} \|L^{-\frac{1}{2}}(\Gamma+ \mu \oplus \nu - M)\|_F^2
\end{split}
\end{equation}
Hence
\begin{equation}
\Lcal(Q^\star, \mu, \nu, \Gamma) = -\frac{1}{2\gamma} \|L^{-\frac{1}{2}}(\Gamma+ \mu \oplus \nu - M)\|_F^2 + \langle \mu, a\rangle + \langle \nu, b\rangle\,.
\end{equation}
Now we solve the problem over $\Gamma$ that is we maximize the problem
\begin{equation}
\max_{\Gamma \geq 0} \ \Lcal(Q^\star, \mu, \nu, \Gamma)\,,
\end{equation}
where $\geq 0$ should be understood pointwise. This is equivalent to 
\begin{equation}
\label{eq:nn_sqaures_psd}
\min_{\Gamma \geq 0} \|L^{-\frac{1}{2}}(\Gamma-A)\|_F^2\,.
\end{equation}
where $A := M - \mu \oplus \nu$. Writing $L = U \Delta U^\top$ where $\Delta = \diag(d_1, \cdots, d_n)$ with $d_i > 0$, \Cref{eq:nn_sqaures_psd} equivalently writes
\begin{equation}
\min_{\Gamma \geq 0} \|\Delta^{-\frac{1}{2}}\Gamma-\Delta^{-\frac{1}{2}}A\|_F^2\,.
\end{equation}
With a change of variable $\tilde{\Gamma} = \Delta^{-1/2} \Gamma \geq 0$ this is equivalent to
\begin{equation}
\min_{\tilde{\Gamma} \geq 0} \|\tilde{\Gamma}-\Delta^{-\frac{1}{2}}A\|_F^2\,,
\end{equation}
whose minimum is given by $\tilde{\Gamma} = [\Delta^{-\frac{1}{2}}A]_+ = \Delta^{-\frac{1}{2}}[A]_+$ since $\Delta^{-\frac{1}{2}}$ is a diagonal matrix with positive entries. Thus the solution of \eqref{eq:nn_sqaures_psd} is given by $\Gamma = \Delta^{1/2} \tilde{\Gamma} = \Delta^{1/2} [\Delta^{-1/2}A]_+ = [A]_+$.  Also $[A]_+ - A = [-A]_{+}$. Thus 
\begin{equation}
\min_{\Gamma \geq 0} \|L^{-\frac{1}{2}}(\Gamma-A)\|_F^2 = \|L^{-\frac{1}{2}}[-A]_{+}\|_F^2 = \|L^{-\frac{1}{2}}[\mu \oplus \nu -M]_{+}\|_F^2
\end{equation}
Hence the dual problem of \Cref{eq:general_quad_ot} is given by $\max_{\mu, \nu} \Lcal(Q^\star, \mu, \nu, [-A]_{+})$ which is
\begin{equation}
\max_{\mu, \nu} \ \langle \mu, a \rangle + \langle \nu, b\rangle -\frac{1}{2 \gamma}\|L^{-\frac{1}{2}}(\mu \oplus \nu -M)_{+}\|_F^2\,.
\end{equation}
Applying this to $M = -2C, \gamma = 2$ and $L = D$ concludes.
\end{proof}

\section{Experiments details and extra results}

\subsection{Details about the implementation \label{app:implem_details}}

The PASCO implementation relies on the \texttt{POT} library \cite{flamary2021pot} for the fusion part. 
The heaviest experiments were performed with the support of the Centre Blaise Pascal's IT test platform at ENS de Lyon (Lyon, France) that operates the SIDUS solution \cite{quemener2013sidus}. We use an \emph{Intel Xeon Gold 5218} machine.

\subsection{Details of the coarsening experiment.}

\Cref{tab:graph_caracteristics} provide some characteristics of the real graphs used in \Cref{fig:RSA}.

\begin{table}[h]
\centering
\begin{tabular}{l|c|c|c|c|c}
          & \# nodes & \# edges & avg degree & assortativity & avg clustering coef \\ \hline
Yeast     &  1.5k    &   1.9k   &     2      &      -0.21    &         0.07        \\ \hline
Minnesota &  2.6k    &   3.3k   &     2      &      -0.18    &         0.02        \\ \hline
Airfoil   &  4.3k    &  12.3k   &     5      &       0.32    &         0.41 
\end{tabular}
\caption{Some characteristics of the real graphs used in \Cref{fig:RSA}, extracted from \cite{nr}.}
\label{tab:graph_caracteristics}
\end{table}

The RSA experiments in \Cref{subsec:expe_coarsening} tested the conservation of spectral properties by coarsening algorithms, including PASCO. \Cref{fig:RSA_SBM} complete the \Cref{fig:RSA} with SBM graphs. Graphs were drawn under the $\SSBM(1000,k,2,\alpha)$, for  $(k = 10, \alpha = 1/k)$, $(k = 100, \alpha = 1/k)$ and $(k = 10, \alpha = 1/(2k))$. Below, in \Cref{fig:RSA_timings}, we display computational times of the coarsening methods. 

\begin{figure}[h]
    \centering
    \includegraphics[width=\linewidth]{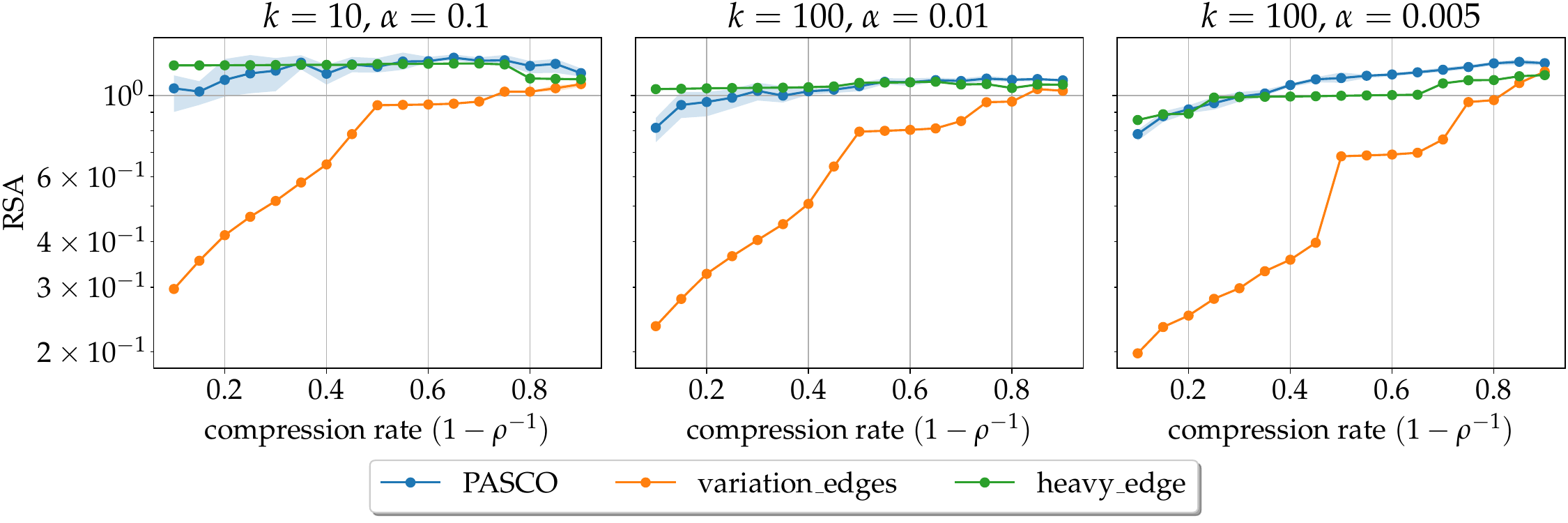} 
    \caption{We represent the RSA (the smaller, the better) of various coarsening schemes (including PASCO), as a function of the compression rate (the higher, the coarser the obtained graph). Shaded areas represent 0.2 upper- and lower-quantiles. 
    }
    \label{fig:RSA_SBM}
\end{figure}

\begin{figure}[h]
    \centering
    \includegraphics[width=.9\linewidth]{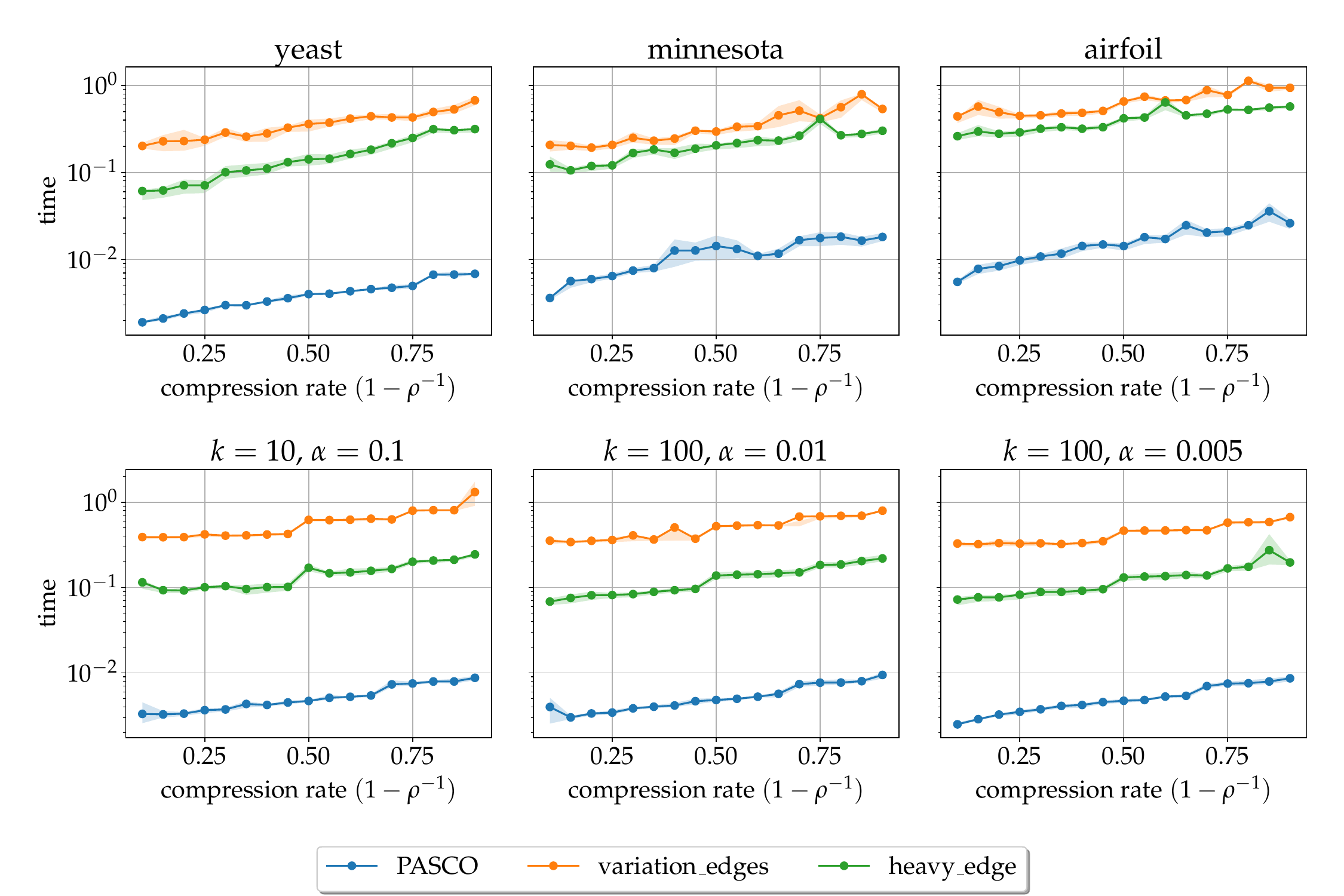} 
    \caption{We represent the computational time of various coarsening schemes (including PASCO), as a function of the compression rate (the higher, the coarser the obtained graph). Shaded areas represent 0.2 upper- and lower-quantiles over the 10 repetitions of the experiment. Top row: we reproduce a part of the experiment of \cite[Figure 2]{loukas2019graph} with the same real graphs and added PASCO. Bottom row: same experiment but with random graphs drawn from $\SSBM(1000, k, 2, \alpha)$ for $(k=10, \alpha=1/k)$, $(k=100, \alpha=1/k)$ and $(k=10, \alpha=1/(2k))$.}
    \label{fig:RSA_timings}
\end{figure}

\FloatBarrier
\subsection{Effectiveness of the alignment/fusion phase}
\label{subsec:expe_fusion}

To demonstrate the alignment+fusion procedure, we now consider a synthetic dataset generated from a two-dimensional Gaussian Mixture Model (GMM) with three clusters. The clusters consist of 500, 400 and 200 points, respectively. Each cluster is sampled from isotropic Gaussian distributions with a standard deviation of 0.25 and centers located at $(0,1)$, $(1,0)$, and $(0,0)$. The resulting dataset is shown in the top left panel of \Cref{fig:illustration_fusion}.

We generate 15 different partitions of the dataset with the goal of recovering the true partition corresponding to the original GMM clusters. The partitions are constructed as follows: first, we randomly select a number of clusters $k$, drawn uniformly between 3 and 10. Then, we designate $k$ centroids: the first three are randomly selected from each of the true GMM clusters (so that we have at least one starting centroid in each true cluster), while the remaining centroids are uniformly sampled from the remaining points. Each point in the dataset is assigned to the nearest centroid, thereby forming a partition. 
Forcing each true cluster to be initially represented by at least one centroid ensures that the resulting partition is related to the true partition. 
See two examples of these generated partitions in \Cref{fig:illustration_fusion}.

The effectiveness of the proposed alignment and fusion method (\Cref{alg:alginment_fusion}) is evaluated by comparing several alignment techniques: \texttt{lin-reg}, \texttt{many-to-one}, and the proposed \texttt{ot} to recover the true partition. For each case, we consider a randomly initialized reference with $\overline{k} = 3$, such that each point is assigned to a cluster chosen uniformly at random. As a baseline, we compare with a $K$-means clustering with $K=3$. We emphasize that the $K$-means algorithm benefits from the spatial coordinates of the data points, whereas the alignment+fusion methods operate solely on the different partitions $P_1, \cdots, P_R$, and ignore the positions. 
The experiment is repeated \add{hundred \sout{five}} times, and the average Adjusted Mutual Information (AMI) \cite{vinh2009information} (see \Cref{app:scores} for the definition) between the inferred and true partitions is plotted as a function of the number of partitions $R$ on the bottom right panel of \Cref{fig:illustration_fusion}. The results indicate that the OT methods achieve performance comparable to $K$-means (high AMI, with small variance) when $R \geq 6$, while \texttt{lin-reg} and \texttt{many-to-one} have high variance and struggle to retrieve the true partition for any value of $R$. This can be explained by the fact that the partitions are quite unbalanced and thus more suited for the coupling constraints. Finally, a partition recovered using the \texttt{ot} method is shown in the bottom left panel of \Cref{fig:illustration_fusion}, illustrating that it is nearly a perfect permutation of the true partition. 

\begin{figure}[t]
\centering
\includegraphics[width=\linewidth]{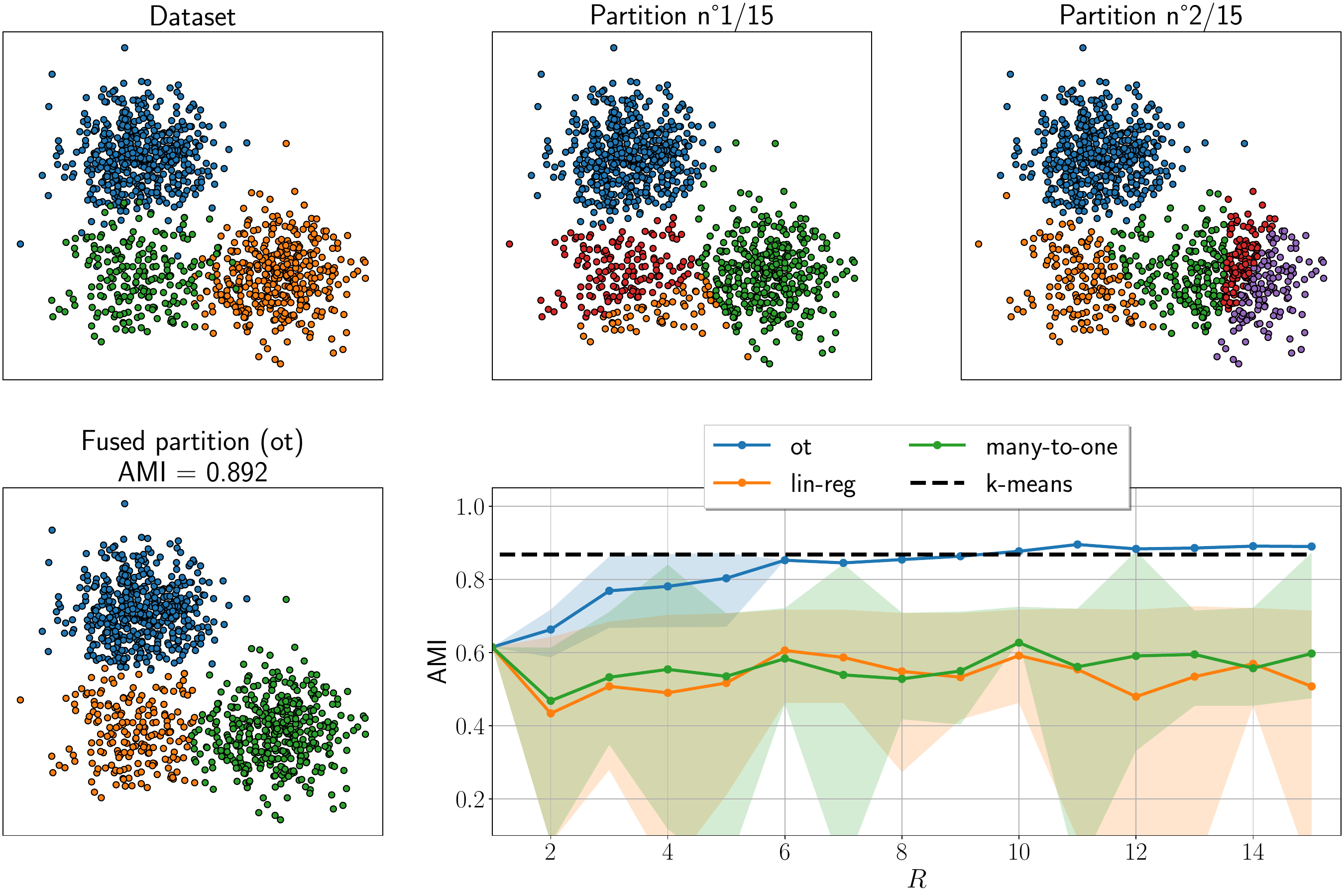}
\caption{\label{fig:illustration_fusion} Experience with a toy dataset drawn from a $2D$ GMM (top left). Colors indicate clusters. Two partitions (out of 15) are depicted (top center and right), as well as the recovered partition using the alignment+fusion procedure based on \texttt{ot} (bottom left). The bottom right panel presents the average AMI  between the true partition and the fused partitions obtained by \texttt{ot}, \texttt{many-to-one}, and \texttt{lin-reg}. Shaded areas represent 0.2 upper- and lower-quantiles over the 100 runs of the alignment+fusion algorithms.} 
\end{figure} 

\FloatBarrier

\subsection{Additional results on parameters influence}

Here, we provide additional results on the experiments of \Cref{subsec:expe_SBM}. \Cref{subfig:influence_align} show the performance of the various alignment methods for different difficulty levels in the SSBM. Parameters are the same as in \Cref{fig:param_influence}.

In \Cref{fig:pasco_timing}, we showed the computational gains of using PASCO. As a sanity check, in \Cref{subfig:quality_for_timing} we show that the performance \emph{w.r.t.} the quality of the output partition are satisfying. Speed was not achieve at the cost of quality.

\begin{figure}
    \centering
    \includegraphics[width=0.6\linewidth]{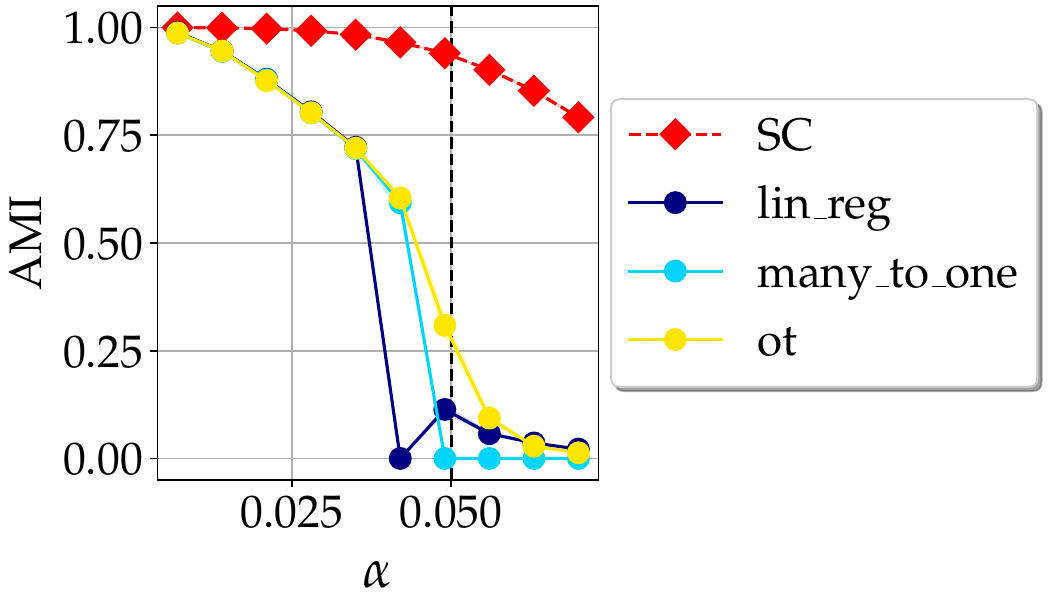}
    \caption{Influence of the alignment method in PASCO. The AMI score is average over the 10 runs and shaded areas represent 0.2 upper- and lower-quantiles. Dashed lines correspond to PASCO threshold}
    \label{subfig:influence_align}
\end{figure}
\begin{figure}
    \centering
    \includegraphics[width=0.6\linewidth]{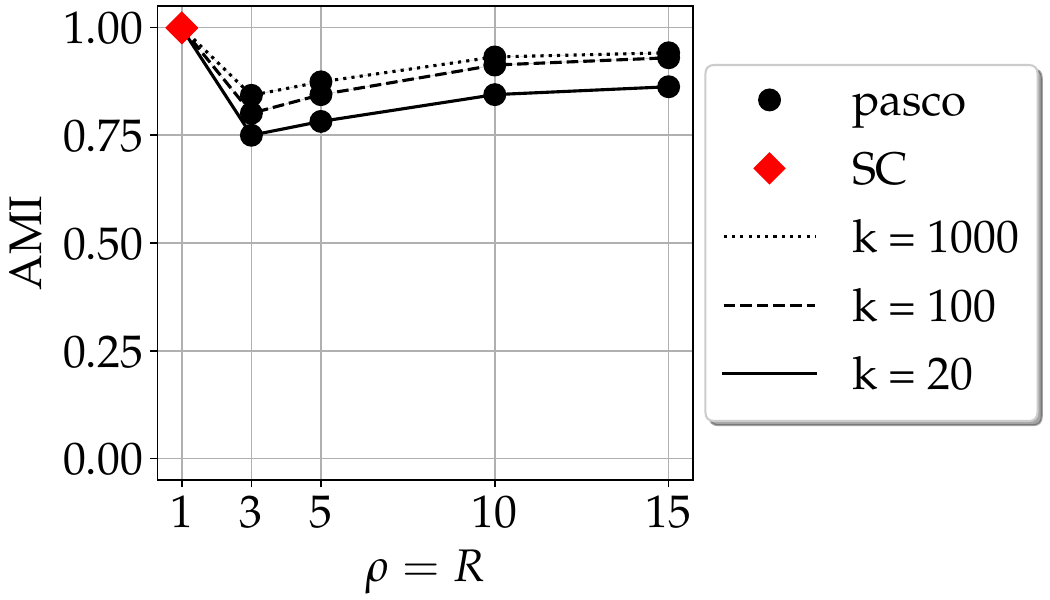}
    \caption{Quality of the partitions w.r.t. $\rho$ and $R$, when $R=\rho$. The number of communities vary in $\{20, 100, 1000\}$. AMI values are average over 10 runs and shaded areas represent 0.2 upper- and lower-quantiles.}
    \label{subfig:quality_for_timing}
\end{figure}

\FloatBarrier

\subsection{Definition of the scores used to evaluate clustering quality \label{app:scores}}

\begin{definition}[Adjusted Mutual Information]
The Adjusted Mutual Information (AMI) between two partitions $U = (U_1, \dots, U_k)$ and $V = (V_1, \dots, V_{k'})$ of the set of node $\bigV$ of size $N$ is given by
\begin{equation}
AMI(U,V) = \frac{MI(U,V) - \mathbb{E}[MI(U,V)]}{  \max(H(U), H(V)) - \mathbb{E}[MI(U,V)]} \,,
\label{eq:AMI}
\end{equation}
where 
\begin{equation}
MI(U,V) = \sum\limits_{i=1}^k \sum\limits_{j=1}^{k'} P_{UV}(i,j) \log \frac{P_{UV}(i,j)}{P_U(i) P_V(j)} \,,
\label{eq:MI}
\end{equation}
with $P_U(i) = |U_i|/N$ (similarly for $P_V(j)$), $P_{UV}(i,j) = |U_i \cap V_j|/N$, and $H(U) = - \sum_i P_U(i) \log P_U(i)$ (similarly for $H(V)$). The expected mutual information (MI) $\mathbb{E}[MI(U,V)]$ in \eqref{eq:AMI} is computed by assuming hyper-geometric distribution for $U$ and $V$, the parameters being estimated from the partitions. For the sake of conciseness and simplicity, we refer the reader to \cite{vinh2009information} for the precise formula.
\label{def:AMI}
\end{definition}

\begin{definition}[Generalized Normalized Cut]
Given a graph $\bigG$ represented by its adjacency or weight matrix $\bigA$, consider a partition $(V_1, \dots, V_k)$ of the vertex set $\bigV$ of $\bigG$. The generalized normalized cut of the partition is defined as 
\begin{equation}
\frac{1}{k} \sum\limits_{j=1}^k \frac{ \sum_{u \in V_j, v \notin V_j} \bigA_{u,v} }{ \sum_{u \in V_j, v \in \bigV} \bigA_{u,v}} \,.
\label{eq:gnCut}
\end{equation}
\label{def:gnCut}
\end{definition}

\begin{definition}[Modularity]
Given a graph $\bigG$ represented by its adjacency or weight matrix $\bigA$, consider a partition $(V_1, \dots, V_k)$ of the vertex set $\bigV$ of $\bigG$. Let $d_u$ denote the degree of node $u$ and $m$ the weight of all the edges. Then, the modularity is given by
\begin{equation}
\frac{1}{2m} \sum\limits_{j=1}^k  \sum_{u \in V_j, v \in V_j} \left( \bigA_{u,v} - \frac{d_u d_v}{2m} \right) \,.
\label{eq:modularity}
\end{equation}
\label{def:modularity}
\end{definition}

\begin{definition}[Description Length]
Let $\bigG = (\bigV, \bigE)$ be a graph and consider the partition $(V_1, \dots, V_k)$ of the vertex set $\bigV$. We denote by $e_{i,j}$ the number of edges between $V_i$ and $V_j$. In mathematical terms, the description length is defined by $\texttt{dl} = \mathcal{S} + \mathcal{L}$, where $\mathcal{S}$ is the entropy of the fitted stochastic block model and $\mathcal{L}$ is the information required to describe the model. Their expressions are given by 
\begin{align*}
\mathcal{S} &= \abs{\bigE} - \frac{1}{2} \sum\limits_{i,j=1}^{k} e_{i,j} \log\left( \frac{e_{i,j}}{\abs{V_i} \abs{V_j}} \right) \\
\mathcal{L} &= \abs{\bigE} \cdot h \left( \frac{k (k+1)}{2\abs{\bigE}} \right) + \abs{\bigV} \log(k), 
\end{align*}
with $h(x) = (1+x) \log(1+x) - x \log x$. 
\label{def:dl}
\end{definition}

\subsection{Real data experiments \label{app:real_data_tables}}

\Cref{tab:real_datasets} provides a few caracteristics of the real graphs used in \Cref{subsec:expe_real}.

\begin{table}[h]
    \centering
    \begin{tabular}{l|c|c|c|c|c}
        Name     & \# nodes    &   \# edges    &  $k$  & $\alpha_{est}$  & $1/(k-1)$ \\ \hline
        \texttt{arxiv}    & $169,343$   &   $2,315,598$ &  $40$ & $0.044$ & $0.026$    \\ 
        \texttt{mag}      & $726,664$   &  $10,778,888$ & $349$ & $0.031$ & $0.0029$   \\ 
        \texttt{products} & $2,385,902$ & $123,612,734$ & $47$  & $0.028$ & $0.022$
    \end{tabular}  
    \caption{Name, number of nodes, number of edges and number of communities of the large real datasets used in the experiments.}
    \label{tab:real_datasets}
\end{table}

Here, we present the results of the experiments on real graph with tables. Bold figures represent the best result for each criterion for a given clustering algorithm (\texttt{SC}, \texttt{CSC}, \texttt{louvain}, \texttt{leiden}, \texttt{MDL}, \texttt{infomap}). 

\begin{table}
\centering
\caption{Results for the arxiv dataset.}
\label{tab:arxiv_table}
\begin{tabular}{lccccc}
\toprule
methods & time $\downarrow$ & ami $\uparrow$ & modularity $\uparrow$ & gnCut $\downarrow$ & dl $\downarrow$ \\
\midrule
ground truth &  &  & 0.493 & 0.436 & 9.23e6 \\
\midrule
SC & 7.42e1 & 0.19 & 0.26 & 0.819 & 9.58e6 \\
SC+PASCO $(t=1)$ & \textbf{4.46e0} & 0.294 & 0.353 & 0.81 & 9.46e6 \\
SC+PASCO $(t=3)$ & 1.02e1 & 0.289 & 0.321 & \textbf{0.779} & 9.49e6 \\
SC+PASCO $(t=5)$ & 1.16e1 & 0.31 & 0.353 & 0.839 & \textbf{9.43e6} \\
SC+PASCO $(t=10)$ & 1.87e1 & \textbf{0.311} & \textbf{0.354} & 0.883 & \textbf{9.43e6} \\
SC+PASCO $(t=15)$ & 3.06e1 & 0.31 & 0.34 & 0.893 & 9.44e6 \\
\midrule
CSC & 2.71e2 & 0.129 & \textbf{0.361} & \textbf{ 0.4} & 9.60e6 \\
CSC+PASCO $(t=1)$ & \textbf{2.54e1} & 0.185 & 0.325 & 0.481 & 9.56e6 \\
CSC+PASCO $(t=3)$ & 3.06e1 & 0.192 & 0.299 & 0.477 & 9.58e6 \\
CSC+PASCO $(t=5)$ & 3.28e1 & 0.21 & 0.291 & 0.48 & 9.55e6 \\
CSC+PASCO $(t=10)$ & 5.29e1 & 0.255 & 0.303 & 0.546 & 9.45e6 \\
CSC+PASCO $(t=15)$ & 7.46e1 & \textbf{0.268} & 0.311 & 0.572 & \textbf{9.41e6} \\
\midrule
louvain & 1.77e0 & 0.39 & 0.704 & 0.897 & 8.39e6 \\
louvain+PASCO $(t=1)$ & \textbf{1.14e0} & 0.349 & \textbf{0.581} & 0.881 & \textbf{9.05e6} \\
louvain+PASCO $(t=3)$ & 5.00e0 & 0.37 & 0.615 & \textbf{0.879} & 8.94e6 \\
louvain+PASCO $(t=5)$ & 6.76e0 & 0.391 & 0.637 & 0.913 & 8.88e6 \\
louvain+PASCO $(t=10)$ & 1.22e1 & 0.393 & 0.648 & 0.914 & 8.85e6 \\
louvain+PASCO $(t=15)$ & 1.79e1 & \textbf{0.405} & 0.655 & 0.918 & 8.82e6 \\
\midrule
leiden & 1.38e1 & 0.409 & 0.713 & 0.909 & 8.38e6 \\
leiden+PASCO $(t=1)$ & \textbf{4.77e0} & 0.359 & \textbf{0.579} & 0.891 & \textbf{9.06e6} \\
leiden+PASCO $(t=3)$ & 1.28e1 & 0.391 & 0.619 & 0.885 & 8.95e6 \\
leiden+PASCO $(t=5)$ & 1.66e1 &  0.4 & 0.64 & \textbf{0.883} & 8.85e6 \\
leiden+PASCO $(t=10)$ & 2.86e1 & 0.408 & 0.654 & 0.892 & 8.81e6 \\
leiden+PASCO $(t=15)$ & 3.67e1 & \textbf{0.418} & 0.665 & 0.922 & 8.75e6 \\
\midrule
MDL & 7.51e2 & 0.351 & 0.651 & 0.705 & 8.03e6 \\
MDL+PASCO $(t=1)$ & \textbf{3.82e2} & 0.322 & 0.396 & \textbf{0.438} & 8.93e6 \\
MDL+PASCO $(t=3)$ & 5.24e2 & 0.338 & 0.384 & 0.442 & \textbf{8.98e6} \\
MDL+PASCO $(t=5)$ & 4.78e2 & 0.351 & 0.429 & 0.45 & 8.79e6 \\
MDL+PASCO $(t=10)$ & 6.52e2 & 0.379 & 0.462 & 0.476 & 8.61e6 \\
MDL+PASCO $(t=15)$ & 8.09e2 & \textbf{0.387} & \textbf{0.496} & 0.506 & 8.52e6 \\
\midrule
infomap & 3.14e1 & 0.376 & 0.696 & 0.775 & 8.06e6 \\
infomap+PASCO $(t=1)$ & \textbf{6.04e0} & 0.358 & \textbf{0.568} & \textbf{0.751} & \textbf{8.82e6} \\
infomap+PASCO $(t=3)$ & 1.09e1 & 0.382 & 0.582 & 0.766 & 8.80e6 \\
infomap+PASCO $(t=5)$ & 1.22e1 & 0.395 & 0.605 & 0.782 & 8.67e6 \\
infomap+PASCO $(t=10)$ & 2.32e1 & 0.415 & 0.628 & 0.799 & 8.57e6 \\
infomap+PASCO $(t=15)$ & 2.81e1 & \textbf{0.423} & 0.639 & 0.809 & 8.53e6 \\
\bottomrule
\end{tabular}
\end{table}

\begin{table}
\centering
\caption{Results for the mag dataset.}
\label{tab:mag_table}
\begin{tabular}{lccccc}
\toprule
methods & time $\downarrow$ & ami $\uparrow$ & modularity $\uparrow$ & gnCut $\downarrow$ & dl $\downarrow$ \\
\midrule
ground truth &  &  & 0.268 & 0.217 & 5.35e7 \\
\midrule
SC & 2.18e3 & 0.325 & 0.727 & 0.64 & \textbf{4.92e7} \\
SC+PASCO $(t=1)$ & \textbf{2.90e2} & 0.367 & \textbf{0.672} & 0.665 & 4.80e7 \\
SC+PASCO $(t=3)$ & 3.60e2 & 0.377 & 0.713 & \textbf{0.608} & 4.70e7 \\
SC+PASCO $(t=5)$ & 4.17e2 & 0.393 & 0.749 & 0.652 & 4.63e7 \\
SC+PASCO $(t=10)$ & 7.32e2 & 0.403 & 0.782 & 0.709 & 4.59e7 \\
SC+PASCO $(t=15)$ & 8.90e2 & \textbf{0.406} & 0.791 & 0.757 & 4.56e7 \\
\midrule
CSC & 4.92e3 & 0.145 & \textbf{0.438} & 0.411 & 5.14e7 \\
CSC+PASCO $(t=1)$ & \textbf{6.97e2} & 0.235 & 0.505 & 0.424 & 5.22e7 \\
CSC+PASCO $(t=3)$ & 7.81e2 & 0.24 & 0.532 & \textbf{0.401} & \textbf{5.24e7} \\
CSC+PASCO $(t=5)$ & 9.03e2 & 0.26 & 0.566 & 0.435 & 5.15e7 \\
CSC+PASCO $(t=10)$ & 1.38e3 &  0.3 & 0.609 & 0.514 & 5.04e7 \\
CSC+PASCO $(t=15)$ & 2.20e3 & \textbf{0.328} & 0.646 & 0.56 & 4.96e7 \\
\midrule
louvain & 1.36e1 & \textbf{0.378} & 0.842 & 0.931 & 4.71e7 \\
louvain+PASCO $(t=1)$ & \textbf{8.47e0} & 0.331 & \textbf{0.748} & 0.904 & \textbf{5.03e7} \\
louvain+PASCO $(t=3)$ & 2.32e1 & 0.352 & 0.773 & 0.891 & 4.98e7 \\
louvain+PASCO $(t=5)$ & 3.31e1 & 0.358 & 0.797 & \textbf{0.824} & 4.92e7 \\
louvain+PASCO $(t=10)$ & 8.07e1 & 0.364 & 0.804 & 0.903 & 4.91e7 \\
louvain+PASCO $(t=15)$ & 6.98e1 & 0.366 & 0.815 & 0.871 & 4.89e7 \\
\midrule
leiden & 9.06e1 & \textbf{0.379} & 0.851 & 0.934 & 4.66e7 \\
leiden+PASCO $(t=1)$ & \textbf{3.39e1} & 0.343 & \textbf{0.755} & 0.918 & \textbf{5.01e7} \\
leiden+PASCO $(t=3)$ & 6.74e1 & 0.36 & 0.78 & 0.918 & 4.95e7 \\
leiden+PASCO $(t=5)$ & 8.18e1 & 0.372 & 0.803 & \textbf{0.882} & 4.90e7 \\
leiden+PASCO $(t=10)$ & 1.36e2 & 0.377 & 0.816 & 0.932 & 4.87e7 \\
leiden+PASCO $(t=15)$ & 1.92e2 & \textbf{0.379} & 0.817 & 0.943 & 4.87e7 \\
\midrule
MDL & 3.91e3 & 0.357 & 0.701 & 0.709 & 3.95e7 \\
MDL+PASCO $(t=1)$ & \textbf{3.32e3} & 0.349 & \textbf{0.446} & 0.45 & \textbf{4.53e7} \\
MDL+PASCO $(t=3)$ & 4.54e3 & 0.355 & 0.474 & \textbf{0.444} & 4.50e7 \\
MDL+PASCO $(t=5)$ & 4.11e3 & 0.369 & 0.521 & 0.475 & 4.37e7 \\
MDL+PASCO $(t=10)$ & 4.95e3 & 0.382 & 0.583 & 0.526 & 4.22e7 \\
MDL+PASCO $(t=15)$ & 7.24e3 & \textbf{0.386} & 0.619 & 0.547 & 4.17e7 \\
\midrule
infomap & 2.24e2 & 0.365 & 0.764 & 0.784 & 4.02e7 \\
infomap+PASCO $(t=1)$ & \textbf{4.12e1} & 0.359 & \textbf{0.667} & \textbf{0.674} & \textbf{4.59e7} \\
infomap+PASCO $(t=3)$ & 1.22e2 & 0.372 & 0.719 & 0.704 & 4.56e7 \\
infomap+PASCO $(t=5)$ & 1.63e2 & 0.384 & 0.752 & 0.715 & 4.45e7 \\
infomap+PASCO $(t=10)$ & 2.83e2 & 0.395 & 0.773 & 0.742 & 4.38e7 \\
infomap+PASCO $(t=15)$ & 4.06e2 & \textbf{0.397} & 0.784 & 0.757 & 4.35e7 \\
\bottomrule
\end{tabular}
\end{table}

\begin{table}
\centering
\caption{Results for the products dataset.}
\label{tab:products_table}
\begin{tabular}{lccccc}
\toprule
methods & time $\downarrow$ & ami $\uparrow$ & modularity $\uparrow$ & gnCut $\downarrow$ & dl $\downarrow$ \\
\midrule
ground truth &  &  & 0.728 & 0.464 & 5.28e8 \\
\midrule
SC & 6.37e2 & 0.202 & \textbf{0.603} & 0.722 & \textbf{5.45e8} \\
SC+PASCO $(t=1)$ & \textbf{2.74e2} & 0.327 & 0.41 & 0.717 & 5.97e8 \\
SC+PASCO $(t=3)$ & 4.37e2 & 0.283 & 0.412 & \textbf{0.663} & 5.98e8 \\
SC+PASCO $(t=5)$ & 5.48e2 & \textbf{0.363} & 0.48 & 0.801 & 5.92e8 \\
SC+PASCO $(t=10)$ & 7.45e2 & 0.332 & 0.444 & 0.885 & 5.95e8 \\
SC+PASCO $(t=15)$ & 9.99e2 & 0.32 & 0.42 & 0.824 & 5.96e8 \\
\midrule
CSC & 2.08e4 & 0.206 & 0.585 & 0.601 & 5.43e8 \\
CSC+PASCO $(t=1)$ & \textbf{3.32e3} & 0.275 & 0.584 & 0.569 & 5.49e8 \\
CSC+PASCO $(t=3)$ & 5.07e3 & 0.262 & 0.552 & \textbf{0.526} & 5.59e8 \\
CSC+PASCO $(t=5)$ & 5.26e3 & 0.302 & 0.561 & 0.555 & 5.57e8 \\
CSC+PASCO $(t=10)$ & 6.45e3 & 0.406 & 0.669 & 0.625 & \textbf{5.30e8} \\
CSC+PASCO $(t=15)$ & 7.54e3 & \textbf{0.436} & \textbf{0.715} & 0.718 & 5.24e8 \\
\midrule
louvain & 9.33e1 & 0.523 & 0.873 & 0.955 & 4.62e8 \\
louvain+PASCO $(t=1)$ & \textbf{8.95e1} & 0.49 & \textbf{0.779} & 0.937 & \textbf{5.17e8} \\
louvain+PASCO $(t=3)$ & 2.75e2 & 0.515 & 0.815 & \textbf{0.896} & 5.06e8 \\
louvain+PASCO $(t=5)$ & 3.76e2 & 0.531 & 0.834 & 0.906 & 5.01e8 \\
louvain+PASCO $(t=10)$ & 6.24e2 & \textbf{0.537} & 0.85 & 0.929 & 4.98e8 \\
louvain+PASCO $(t=15)$ & 8.23e2 & 0.535 & 0.849 & 0.897 & 4.97e8 \\
\midrule
leiden & 7.93e2 & \textbf{0.554} & 0.881 & 0.957 & 4.55e8 \\
leiden+PASCO $(t=1)$ & \textbf{3.61e2} & 0.494 & \textbf{0.786} & 0.941 & \textbf{5.14e8} \\
leiden+PASCO $(t=3)$ & 8.07e2 & 0.526 & 0.826 & \textbf{0.925} & 5.05e8 \\
leiden+PASCO $(t=5)$ & 9.63e2 & 0.538 & 0.845 & 0.944 & 5.00e8 \\
leiden+PASCO $(t=10)$ & 1.65e3 & 0.549 & 0.851 & 0.94 & 4.95e8 \\
leiden+PASCO $(t=15)$ & 2.26e3 & 0.547 & 0.858 & 0.949 & 4.96e8 \\
\midrule
MDL & 5.27e4 & 0.494 & 0.859 & 0.887 & 4.56e8 \\
MDL+PASCO $(t=1)$ & \textbf{4.65e4} & 0.491 & \textbf{0.717} & \textbf{0.745} & \textbf{4.92e8} \\
MDL+PASCO $(t=3)$ & 5.93e4 & 0.516 & 0.743 & 0.761 & 4.86e8 \\
MDL+PASCO $(t=5)$ & 6.55e4 & 0.533 & 0.785 & 0.803 & 4.75e8 \\
MDL+PASCO $(t=10)$ & 8.03e4 & 0.551 & 0.808 & 0.799 & 4.70e8 \\
MDL+PASCO $(t=15)$ & 9.58e4 & \textbf{0.56} & 0.827 & 0.837 & 4.65e8 \\
\midrule
infomap & 2.71e3 & 0.504 & 0.87 & 0.896 & 4.54e8 \\
infomap+PASCO $(t=1)$ & \textbf{4.78e2} & 0.495 & \textbf{0.765} & \textbf{0.797} & \textbf{4.89e8} \\
infomap+PASCO $(t=3)$ & 7.61e2 & 0.523 & 0.794 & 0.832 & 4.80e8 \\
infomap+PASCO $(t=5)$ & 9.93e2 & 0.533 & 0.812 & 0.834 & 4.73e8 \\
infomap+PASCO $(t=10)$ & 1.18e3 & 0.546 & 0.841 & 0.872 & 4.66e8 \\
infomap+PASCO $(t=15)$ & 1.50e3 & \textbf{0.553} & 0.85 & 0.882 & 4.64e8 \\
\bottomrule
\end{tabular}
\end{table}

\end{appendices}


\end{document}